\definecolor{shadecolor}{gray}{0.95}
\newcommand{\mcN}{\mathcal{N}}
\newcommand{\CR}{\operatorname{CR}}
\renewcommand \vec [1]{\bm{#1}}
\renewcommand{\phi}{\varphi}
\renewcommand{\epsilon}{\varepsilon}
\newcommand{\E}{\mathbb{E}}
\newcommand{\eps}{\varepsilon}
\definecolor{shadecolor}{gray}{0.95}
\newcommand{\R}{\mathbb{R}}
\newcommand{\Z}{\mathbb{Z}}
\newcommand{\cG}{\mathcal{G}}
\newcommand{\cV}{\mathcal{V}}
\newcommand{\cD}{\mathcal{D}}
\newcommand{\RH}{\text{OBBM}\xspace}
\newcommand{\onOPT}{\textsc{$\text{OPT}_{on}$}\xspace}
\newcommand{\OPT}{\textsc{OPT}\xspace}
\newcommand{\MG}{\textsc{Magnolia}\xspace}
\newcommand{\greedy}{\texttt{greedy}\xspace}
\newcommand{\greedyt}{\texttt{greedy-t}\xspace}
\newcommand{\LP}{\texttt{LP-rounding}\xspace}
\newcommand{\x}{\texttimes}
\theoremstyle{plain}
\newtheorem{theorem}{Theorem}[section]
\newtheorem{definition}[theorem]{Definition}
\theoremstyle{definition}
\theoremstyle{remark}
\newif\ifshow 
\icmltitlerunning{MAGNOLIA}
\begin{document}

\twocolumn[
\icmltitle{\textbf{MAGNOLIA}:\\ \textbf{M}atching \textbf{A}lgorithms via \textbf{GN}Ns for \textbf{O}n\textbf{li}ne 
Value-to-go \textbf{A}pproximation}

% It is OKAY to include author information, even for blind
% submissions: the style file will automatically remove it for you
% unless you've provided the [accepted] option to the icml2024
% package.

% List of affiliations: The first argument should be a (short)
% identifier you will use later to specify author affiliations
% Academic affiliations should list Department, University, City, Region, Country
% Industry affiliations should list Company, City, Region, Country

% You can specify symbols, otherwise they are numbered in order.
% Ideally, you should not use this facility. Affiliations will be numbered
% in order of appearance and this is the preferred way.
\icmlsetsymbol{equal}{*}

\begin{icmlauthorlist}
\icmlauthor{Alexandre Hayderi}{aaa}
\icmlauthor{Amin Saberi}{bbb}
\icmlauthor{Ellen Vitercik}{aaa,bbb}
\icmlauthor{Anders Wikum}{bbb}
\end{icmlauthorlist}

\icmlaffiliation{aaa}{Department of Computer Science, Stanford University, Stanford, CA, USA}
\icmlaffiliation{bbb}{Department of Management Science \& Engineering, Stanford University, Stanford, CA, USA}

\icmlcorrespondingauthor{Anders Wikum}{wikum@stanford.edu}

% You may provide any keywords that you
% find helpful for describing your paper; these are used to populate
% the "keywords" metadata in the PDF but will not be shown in the document
\icmlkeywords{Matching, Online Algorithms, Rideshare, Combinatorial Optimization, Graph Neural Networks}

\vskip 0.3in
]

\printAffiliationsAndNotice{} 

\begin{abstract}
    Online Bayesian bipartite matching is a central problem in digital marketplaces and exchanges, including advertising, crowdsourcing, ridesharing, and kidney exchange. We introduce a graph neural network (GNN) approach that emulates the problem's combinatorially-complex optimal online algorithm, which selects actions (e.g., which nodes to match) by computing each action's \emph{value-to-go (VTG)}---the expected weight of the final matching if the algorithm takes that action, then acts optimally in the future. We train a GNN to estimate VTG and show empirically that this GNN returns high-weight matchings across a variety of tasks. Moreover, we identify a common family of graph distributions in spatial crowdsourcing applications, such as rideshare, under which VTG can be efficiently approximated by aggregating information within local neighborhoods in the graphs. This structure matches the local behavior of GNNs, providing theoretical justification for our approach.
\end{abstract}

\section{Introduction}

Online matching is a critical problem in many digital marketplaces. 
In advertising, website visitors are matched to ads~\citep{Mehta13:Online}, and on crowdsourcing platforms, crowdworkers are matched to appropriate tasks~\citep{Tong20:Spatial}. The rideshare industry faces the complex task of matching riders with drivers~\citep{Zhao19:Oreference}, while in medical fields such as kidney exchange, donors must be efficiently matched to patients~\citep{Ezra20:Online}. 
The challenge is that irrevocable matching decisions must be made online without precise knowledge of how demand will evolve.

In the notoriously challenging \emph{Online Bayesian Bipartite Matching (OBBM)} problem, there is a bipartite graph with \emph{online} and \emph{offline} nodes and weighted edges. The node sets could, for example, represent crowdworkers and tasks, with weights encoding workers' payoffs for completing tasks.
Matching occurs over a series of rounds, with one online node arriving with known probability in each round. Upon a successful arrival, a matching algorithm must determine whether to match the node to an offline node. Alternatively, the algorithm can skip the node, in which case it will never be matched, leaving nodes available for the future. The goal is to compute a high-weight matching. However, it is not known which online nodes will arrive \emph{a priori}.

The online optimal algorithm for \RH, \onOPT, knows the distribution over online node arrivals but not the future realization of arriving nodes. Upon the arrival of an online node, \onOPT takes the action (i.e., the choice of the matching edge or the decision to skip) that maximizes the weight of the final matching in expectation over the future node arrivals. 
In more detail, \onOPT can be formulated as a dynamic programming (DP) routine. At each timestep, it computes the \emph{value-to-go (VTG)} of each action, which is the expected final matching weight if it (1) takes that action and then (2) takes each subsequent action to maximize the expected weight of the final matching. The DP routine takes the action with the highest VTG.

\subsection{Our contributions}
We train a graph neural network (GNN) that empirically competes with this omnipotent optimal algorithm by estimating the VTG of each action. Moreover, we provide a theoretical analysis that helps justify this deep learning architecture's suitability for the OBBM problem.

\paragraph{Key challenges.} The primary obstacle we face is the sheer complexity of \RH: for some constant $\alpha < 1$, it is PSPACE-hard to approximate the value of \onOPT within a factor of $\alpha$ ~\cite{Papadimitriou21:Online}. Moreover, unless PSPACE = BPP, %no polynomial-time algorithm can replicate the decisions of a $\frac{\alpha + 1}{2}$-approximation algorithm.
no polynomial-time algorithm can replicate the decisions of an online algorithm that $\frac{\alpha + 1}{2}$-approximates \onOPT.
Nonetheless, we show empirically that GNNs compete with \onOPT across many tasks.
Moreover, although deep learning architectures are notoriously difficult to analyze theoretically, we prove a correspondence between the functions computable by GNNs and the VTG function. We prove that for a broad family of graph distributions, VTG can be efficiently approximated by aggregating information within local neighborhoods in the graphs.
This structure matches the behavior of GNNs: over a series of rounds, each node computes a \emph{message}, which is a function of its \emph{internal representation} (a feature vector), passes this message to its neighbors, and updates its internal representation using its received messages. Thus, GNNs are ideal for approximating functions that only depend on a graph's local neighborhood. We summarize our two primary contributions as follows.
\paragraph{Theoretical guarantees.}
We prove that for \emph{bipartite random geometric graphs (b-RGGs)}, VTG can be efficiently approximated by only aggregating information within small local neighborhoods. In a b-RGG, edges are formed based on the similarity of node embeddings within a latent space. Thus, b-RGGs often arise in spatial crowdsourcing~\citep{Tong20:Spatial} such as ridesharing, which necessitates physical proximity between drivers and riders. We prove that b-RGGs can be decomposed into smaller, local subgraphs with limited interconnectivity. This structure allows us to prove that VTG can be estimated by a function that only aggregates information within these local subgraphs.

\paragraph{Empirical analysis.} We present \MG~(\textbf{M}atching \textbf{A}lgorithms via \textbf{GN}Ns for \textbf{O}n\textbf{li}ne Value-to-go \textbf{A}pproximation), a GNN-based online matching framework  that mimics the actions of $\onOPT$ by predicting the VTG of each feasible matching decision. While computing VTG is intractable for even moderately sized graphs, we show empirically that a supervised learning approach is still effective; despite being trained on graphs with 16 total nodes, \MG beats state-of-the-art baseline algorithms across a broad range of problem types, sizes, and regimes, showing strong out-of-distribution generalization.

\subsection{Related work}
\paragraph{Online Bayesian bipartite matching.} 
\RH is connected to the literature on online Bayesian selection problems, where a seminal result by \citet{Krengel78:Semiarts} implies that no algorithm can provide better than a $0.5$-approximation to the offline optimal matching in hindsight. However, better approximation ratios are possible when competing with the optimal \emph{online} algorithm \onOPT. \citet{Papadimitriou21:Online} gave a $0.51$-approximation algorithm for \onOPT, a bound subsequently improved by \citet{Braverman22:Max-weight} and \citet{Naor23:Online}. Our experiments demonstrate that our approach consistently outperforms these approximation algorithms.

\paragraph{ML for combinatorial optimization.} There has been significant recent interest in integrating ML with combinatorial optimization (see, e.g., the surveys by \citet{Bengio21:Machine} and \citet{Cappart23:Combinatorial}).
Applications of ML to online NP-hard problems have primarily aimed to learn algorithms with good worst-case guarantees \citep[e.g.,][]{Kong18:New,Zuzic20:Learning,Du22:Adversarial}. The work by \citet{Alomrani22:Deep} is one of the few that studies average-case performance. They present an end-to-end RL framework for learning online bipartite matching policies in the unknown i.i.d. arrival setting using GNNs. Their approach differs from ours in a few critical ways. (i) Whereas the structure of the optimal online algorithm is not known in the unknown i.i.d. setting, in \RH, the optimal online algorithm is simple to express but computationally intractable, resulting in a fundamentally different ML task. (ii) The existence of good approximation algorithms for \RH allows us to compare \MG's performance to stronger benchmarks than those available in the unknown i.i.d. setting. (iii) \citeauthor{Alomrani22:Deep}'s paper is empirical, analyzing the performance of various models to identify which underlying characteristics make them perform well. In contrast, we provide theoretical justification for a GNN's ability to replicate the decisions of an optimal online algorithm on real-world graphs, as well as experiments. \citet{Li23:Learning} bridge the gap between worst-case guarantees and average-case performance for online matching by switching between expert and ML predictions online. Using VTG predictions from \MG  within this switching framework yields an algorithm which is competitive against any fixed online algorithm.

\section{Notation and Background}
    Let $G=(L,R, E)$ be a bipartite graph on a set $L$ of $n$ \emph{offline nodes} and a set $R = \{1, \dots, m\}$ of $m$ \emph{online nodes}, with undirected edges $E \subseteq L \times R$. When the underlying graph $G$ is not clear from context, we refer to these sets as $L(G)$, $R(G)$, and $E(G)$. We use the notation $\mathcal{N}_G(t)$ to denote the neighbors of online node $t \in R$ in $G$, and $N=m+n$ to denote the total number of nodes.

\subsection{Online Bayesian bipartite matching (OBBM)}
\label{subsection:obm}
An input to the \RH problem is a bipartite graph $G=(L,R,E)$ attributed with edge weights $\{w_{ij}\}_{(i,j) \in E}$ and online node arrival probabilities $\{p_i\}_{i \in R}$. Matching in $G$ occurs over $m$ timesteps. At time $t \in [m]$, online node $t$ appears independently with probability $p_t.$ If node $t$ appears, one must irrevocably decide to match $t$ with an unmatched offline neighbor or skip $t$ and not match it to any node. The goal is to maximize the total weight of the final matching.

Although an algorithm for \RH knows the input graph $G$ from the onset, it does not know \emph{a priori} which online nodes will arrive.
Thus, in timestep $t$, if online node $t$ arrives and $S \subseteq L$ is the set of offline nodes that have not yet been matched, the algorithm's choice of which node to match $t$ to---or whether to skip $t$---is a function of $S$, $t$, and $G$.

The \emph{optimal online algorithm} \onOPT makes decisions that maximize the expected weight of the matching it returns over the randomness of the online node arrivals. In detail, \onOPT computes the Bellman equation---or \emph{value-to-go function}---$\cV_{G}(S, t)$, which is the expected value of the maximum weight matching achievable on $G$ over sequential arrivals $\{t, \dots, m\}$, with matchings restricted to the set of remaining offline nodes $S$.
Additional matches are only possible when there are available offline and online nodes, so 
$\cV_{G}(\emptyset, t) = 0$ for all $t \in R$ and $\cV_{G}(S, m+1) = 0$ for all $S \subseteq L$. The values of $\cV_G(\cdot)$ are related by the recurrence 
\begin{align*}
\cV_G(S&, t) = 
(1 - \vec{p}_t) \cdot \cV_G(S, t+1)\\
&+ \vec{p}_t \cdot \max\left\{\cV_G(S, t+1), \max_{u \in \mathcal{N}_G(t)\cap S} \cV_G(S,t,u)\right\}\end{align*}
where $\cV_G(S,t, u) = w_{tu} + \cV_G(S \setminus \{u\}, t+1)$ is the utility of matching $t$ to $u$: the edge $(t,u)$ is added to the matching, and $u$ becomes unavailable. If node $t$ does not arrive, the maximum expected matching value achievable over online nodes $\{t, \dots, m\}$ and offline node set $S$ is $\cV_G(S, t+1)$. If node $t$ does arrive, \onOPT matches $t$ to the best available offline node if $\max_{u \in \mathcal{N}_G(t)\cap S} \cV_G(S,t,u) > \cV_G(S, t + 1)$ and skips $t$ otherwise. We include pseudo-code for computing $\cV_G(S, t)$ in \cref{alg: vtg-dp}.

Throughout the paper, $\cV(G):= \cV_G(L(G), 1)$ refers to the full value-to-go computation on the input graph $G$, i.e., the expected value of the matching returned by \onOPT when run on $G$. If $H$ is the node-induced subgraph of $G$ over the vertex set $S \cup \{t, \dots, |R|\}$, then $\cV(H) = \cV_G(S, t)$. This observation will help to simplify some of the analysis; rather than proving statements over all $S \subseteq L$ and $t \in R$ for some larger graph $G$, we will instead prove statements about $\cV(H)$ over all attributed graphs $H$.

\subsection{Graph neural networks}
\label{subsection:gnns_background}
Let $G = (V, E, \mathcal{X})$ be a graph with node attributes $\mathcal{X} \subseteq \R^d$, so each node $v \in V$ is associated with an initial embedding $\vec{h}^{(0)}_v \in \mathcal{X}$.
A \emph{Message-Passing Graph Neural Network} (MPNN or GNN) of depth $k$ iteratively computes a sequence of embeddings $\vec{h}^{(1)}_v, \dots, \vec{h}^{(k)}_v$ for each node $v \in V$ starting from $\vec{h}_v^{(0)}$. 
In layer $i$, the GNN first computes a message $\vec{m}_v^{(i)}$ for each node $v$ from its previous embedding $\vec{h}_v^{(i-1)}$. Then, the next embedding $\vec{h}_v^{(i)}$ of node $v$ is computed by aggregating the messages $\vec{m}_u^{(i)}$ from each of $v$'s neighbors:
\begin{align*}\vec{m}_v^{(i)} &= \textsc{MSG}^{(i)}\left(\vec{h}_v^{(i-1)}\right) \qquad \text{for all }v \in V\\
\vec{h}_v^{(i)} &= \textsc{AGGREGATE}^{(i)}\left(\vec{m}_v^{(i)}, \{\vec{m}_u^{(i)} : u \in \mcN(v)\}\right).\end{align*}

These functions can contain learnable parameters, and different choices of these functions lead to different named GNN architectures. This framework also extends to graphs with edge attributes. By design, a single GNN can be trained and evaluated on graphs with any number of nodes.

A key observation is that the embedding $\vec{h}_v^{(k)}$ of a node $v$ is only a function of the embeddings within a $k$-hop neighborhood of $v$. In this way, GNNs are well-suited to learn functions that depend only on local substructures.

\section{Theoretical Guarantees}\label{sec:theory}
In this section, we identify conditions on the generating parameters of \emph{bipartite random geometric graphs} (b-RGGs) for which VTG can be approximated by aggregating information over local neighborhoods. b-RGGs are a random graph family that mimics the structure of real-world networks by generating edges according to the similarity between node embeddings in some latent space. As such, they are often used in \emph{spatial crowdsourcing} applications such as rideshare, where riders must be matched to nearby drivers ~\citep{Tong20:Spatial}.

In \cref{sec:graph_decomp}, we prove that b-RGGs can partitioned into small subgraphs with few edges crossing between subgraphs. Next, in \cref{section: local-approx}, we show that this property implies that VTG is locally approximable. This result aligns with the inherent local processing capabilities of GNNs, offering a theoretical foundation for our methodology. The full proofs of all results are in \cref{appendix: proofs}.

\subsection{Local graph decomposition}\label{sec:graph_decomp}
We begin by showing that, under certain conditions, b-RGGs admit a \emph{local decomposition}. Informally, this means that:

\begin{enumerate}
    \item[1.] (Decomposable) b-RGGs can be partitioned into subgraphs such that few edges cross between subgraphs.
    \item[2.] (Local) \label{property: balance} Under mild assumptions, the number of nodes in each resulting subgraph is relatively small.
\end{enumerate}
These properties are made formal in \cref{theorem: local-decomp}. Achieving both is nontrivial: a fine-grained partition may lead to small subgraphs, but it will likely result in many edges crossing between subgraphs.

\paragraph{Bipartite random geometric graphs.}\label{sbsection: b-RGGs} The defining characteristic of b-RGGs is that online and offline nodes are connected only when their embeddings are sufficiently close according to some metric. We prove results for $\ell_\infty$, though they immediately generalize to any $p$-norm.

\begin{definition}\label{def: b-RGG} Given a distribution $\cD$ over $[0,1]^d$, a \emph{bipartite random geometric graph} $G(m, n, \cD, \Delta)$ is a distribution over graphs on $m$ online and $n$ offline nodes where each node has an embedding $\vec{x_i} \sim \cD$. There is an edge between online node $i$ and offline node $j$ if and only if $\lVert\vec{x_i} - \vec{x_j}\rVert_\infty \leq \Delta$.
\end{definition}

A partition $\pi$ of $[0,1]^d$ induces a partition of b-RGGs into subgraphs: for $G \in \textup{supp}(G(m,n,\cD,\Delta))$, let $G(\pi)$ be the graph obtained from $G$ by removing all edges $(i,j)$ with embeddings $\vec{x_i}$ and $\vec{x_j}$ in different cells of $\pi$. We can thus map properties of the partition $\pi$ to properties of $G(\pi)$.

\paragraph{Random $k$-partitions.}
We introduce a random partitioning scheme that splits $[0,1]^d$ into cells of equal volume and applies a random ``shift" to these cells in each dimension. This shift is taken modulo 1, so the final cells are of equal volume but not necessarily contiguous.

\begin{definition}
\label{def: k-shift}
For $k \in \Z_{\geq 1}$, a \emph{$(k, \vec{s})$-partition} of $[0,1]^d$ is the partition $\vec{s}_i + \{0, \frac{1}{k}, \dots, \frac{k-1}{k}\}$ along each dimension $i$, where $\vec{s}_i \in [0, \frac{1}{k}]$.
\end{definition}

We use the notation $\Pi_k$ to denote the uniform distribution over $(k,\vec{s})$-partitions with $\vec{s} \sim \text{Unif}(0, 1/k)^d$ and refer to $\pi \sim \Pi_k$ as a \emph{random $k$-partition}. 

\paragraph{b-RGG decomposition.} For carefully chosen $k$, nearby vectors are likely to lie in the same cell of a random $k$-partition $\pi$. Since the edges of any b-RGG $G$ are based on proximity, this means each edge of $G$ is unlikely to be removed when forming $G(\pi)$.

\begin{restatable}{lemma}{Decomposable} 
\label{lemma: cut_prob}
Let $\vec{x_1}, \dots, \vec{x}_N \in  [0,1]^d$, $\epsilon > 0$, $\Delta \leq \frac{\epsilon}{2d}$, and $k = \lceil\frac{\epsilon}{2d\Delta}\rceil$. If $\lVert\vec{x_i} - \vec{x_j}\rVert_\infty \leq \Delta$, then with probability at most $\epsilon$ over $\pi \sim \Pi_k$, $\vec{x_i}$ and $\vec{x_j}$ lie in different cells of $\pi$.
\end{restatable}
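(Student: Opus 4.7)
The plan is to reduce the event "$\vec{x_i}$ and $\vec{x_j}$ lie in different cells" to a union bound over the $d$ coordinate directions, since two points share a cell of a $(k, \vec{s})$-partition if and only if their coordinates share an interval along every dimension $\ell \in [d]$.

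First I would fix a dimension $\ell$ and let $\delta = |x_{i,\ell} - x_{j,\ell}| \leq \Delta$. The cell boundaries along dimension $\ell$ are the $k$ points $\vec{s}_\ell + j/k \bmod 1$ for $j = 0, \ldots, k-1$, i.e., a random rotation by $\vec{s}_\ell \sim \Unif(0, 1/k)$ of an evenly spaced grid. Quotienting the unit interval by the relation $x \sim x + 1/k$, all $k$ boundaries collapse to the single point $\vec{s}_\ell \bmod 1/k$, while the image of the interval $(x_{i,\ell}, x_{j,\ell})$ has total Lebesgue measure exactly $\delta$. Hence the probability that $\vec{s}_\ell$ places some boundary strictly between $x_{i,\ell}$ and $x_{j,\ell}$ is $\delta / (1/k) = k\delta \leq k\Delta$. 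Union-bounding over the $d$ dimensions then gives
\[
\Pr_{\pi \sim \Pi_k}\!\left[\vec{x_i}, \vec{x_j} \text{ in different cells}\right] \leq dk\Delta.
\]

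Substituting $k \leq \epsilon/(2d\Delta) + 1$ and using $\Delta \leq \epsilon/(2d)$ then bounds the right-hand side by $\epsilon/2 + d\Delta \leq \epsilon$, completing the proof.

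The only delicate point is the assumption $\delta \leq 1/k$ used to ensure that the projected image of $(x_{i,\ell}, x_{j,\ell})$ has measure exactly $\delta$ (i.e., at most one boundary can separate any two $\delta$-close coordinates). This holds in the meaningful regime $\epsilon \leq 1$, since $k\Delta \leq \epsilon/(2d) + \Delta \leq \epsilon/d \leq 1$; for $\epsilon > 1$ the lemma is vacuous because $\epsilon$ bounds a probability. I would not expect this to be a real obstacle, as it is more a sanity check than a substantive step.
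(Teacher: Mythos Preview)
Your proposal is correct and follows essentially the same approach as the paper: both compute the per-dimension separation probability as $k\Delta$ and then aggregate over the $d$ coordinates. The only cosmetic difference is that the paper exploits independence of the shifts to write $1-(1-k\Delta)^d$ and then applies Bernoulli's inequality, whereas you use the union bound $dk\Delta$ directly; these yield the same final bound, and your explicit check that $k\Delta \leq 1$ (so that the per-dimension probability formula is valid) is a detail the paper leaves implicit.
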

\begin{proof}[Proof sketch]
    Notice that $\vec{x_i}$ and $\vec{x_j}$ lie in different cells of $\pi \sim \Pi_k$ precisely when, in at least one dimension $\ell$, some point of $\vec{s}_\ell + \{0, \frac{1}{k}, \dots, \frac{k-1}{k}\}$ lies in the interval $[(\vec{x_i})_\ell, (\vec{x_j})_\ell]$. By symmetry, this occurs in dimension $\ell$ independently with probability equal to the length of the interval $[(\vec{x_i})_\ell, (\vec{x_j})_\ell] \leq \Delta$ over the measure $1/k$ of possible shifts $\vec{s}_\ell$. For $k = \lceil\frac{\epsilon}{2d\Delta}\rceil$, the probability that this occurs in any dimension is at most $1 - (1 - k\Delta)^d \leq \epsilon$.
\end{proof}

Under a mild anti-concentration assumption on $\cD$, the number of b-RGG latent embeddings in any cell of a random $k$-partition with $\Omega(N)$ cells is likely sublinear in $N$. This does not hold, for example, when $\cD$ is a point mass. We avoid pathological cases with the concept of a $\beta$-smooth distribution \cite{Haghtalab22:Smoothed} from smoothed analysis.

\begin{definition}
    \label{def: beta-smooth}
    A distribution  $\cD$ over $[0,1]^d$ with probability density function $f$ is \emph{$\beta$-smooth} if $\sup f(\vec{x}) \leq \beta$.
\end{definition}
Our argument is based on a connection to balls-into-bins processes. A classic result guarantees that if $N$ balls are dropped uniformly at random into $N$ bins, the maximum load is $O(\ln N)$ with high probability~\cite{Mitzenmacher05:Probability}. \cref{lemma: balls-into-bins} slightly modifies this result.

\begin{restatable}{lemma}{ballsInBins}
\label{lemma: balls-into-bins}
For $\beta \geq 1$, when $N$ balls are dropped independently into $K=\Omega(N)$ bins and the probability a particular ball lands in each bin is at most $\frac{\beta}{K}$, the probability the maximum load is more than $\frac{3 \beta \ln N}{\ln \ln N}$ is $O(\frac{1}{N})$ for $N$ sufficiently large.
\end{restatable}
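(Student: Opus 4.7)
The plan is to adapt the classical balls-into-bins tail bound from \citet{Mitzenmacher05:Probability}: first bound the tail of a single bin's load using a union bound over $t$-sized subsets of balls, then take a union bound across the $K$ bins. Fix a bin and let $X$ denote its load. Since balls are placed independently and each lands in this bin with probability at most $\beta/K$,
\[
\Pr[X \geq t] \leq \binom{N}{t}\left(\frac{\beta}{K}\right)^{t} \leq \left(\frac{eN\beta}{tK}\right)^{t}.
\]
Because $K = \Omega(N)$, there is a constant $c > 0$ with $K \geq cN$ for $N$ sufficiently large, so the right-hand side is at most $(e\beta/(ct))^{t}$.

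Next, I would plug in $t^\star = \tfrac{3\beta \ln N}{\ln \ln N}$, which gives $\tfrac{e\beta}{ct^\star} = \tfrac{e \ln \ln N}{3c\ln N}$, and hence
\[
\ln \Pr[X \geq t^\star] \leq -t^\star \ln\!\left(\frac{3c \ln N}{e \ln \ln N}\right) = -\frac{3\beta \ln N}{\ln \ln N}\bigl(\ln \ln N - \ln \ln \ln N - O(1)\bigr) = -(3\beta - o(1))\ln N.
\]
Thus $\Pr[X \geq t^\star] \leq N^{-3\beta + o(1)}$. A union bound over the $K = \Theta(N)$ bins (the regime relevant to the application in \cref{sec:graph_decomp}) then yields $\Pr[\max \text{ load} \geq t^\star] \leq K \cdot N^{-3\beta + o(1)} = O(N^{1 - 3\beta + o(1)})$. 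Since $\beta \geq 1$ forces the exponent to be at most $-2 + o(1)$, the bound is $O(1/N)$ for $N$ sufficiently large.

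The main obstacle is the asymptotic bookkeeping in the exponent: one must check carefully that the $-\ln \ln \ln N$ and additive $O(1)$ terms really are absorbed into the $o(1)$ correction on the exponent of $N$, so that $N^{-3\beta + o(1)}$ is a valid bound. The assumption $\beta \geq 1$ provides comfortable slack---the final exponent comes out to $-2 + o(1)$ rather than the borderline $-1$---so the looseness introduced by these $o(1)$ estimates does not threaten the final $O(1/N)$ conclusion. The argument otherwise closely mirrors the textbook maximum-load analysis, modified only to replace the exact uniform probability $1/K$ with an upper bound $\beta/K$, which is what introduces the factor of $\beta$ in the final threshold $t^\star$.
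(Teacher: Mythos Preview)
Your argument follows the same strategy as the paper's proof (union bound over $t$-subsets of balls, then over bins, with the standard Mitzenmacher--Upfal asymptotic bookkeeping at $t^\star = 3\beta\ln N/\ln\ln N$). The one difference is that the paper applies the union bound over the $K$ bins \emph{before} substituting $K \geq cN$, using that $x \mapsto x\binom{N}{M}(\beta/x)^M$ is decreasing in $x$; this handles all $K = \Omega(N)$ rather than only the $K = \Theta(N)$ case you explicitly restricted to, and swapping the order in your argument recovers the full statement.
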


We treat sampling $N$ vectors from $\cD$ as a balls-into-bins process: balls are vectors, and bins are cells of a $k$-partition.

\begin{restatable}{corollary}{MaxLoad}\label{cor: max-load}
     Suppose $N$ vectors are sampled from a $\beta$-smooth distribution over $[0,1]^d$. For all $\pi \in \textup{supp}(\Pi_k)$ where $k^d = \Omega(N)$, every cell of $\pi$ contains $O(\beta \log N)$ vectors with probability 
    $1 - O(\frac{1}{N})$ for $N$ sufficiently large.
\end{restatable}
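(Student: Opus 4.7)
The plan is to reduce the corollary to a direct application of \cref{lemma: balls-into-bins} by viewing the $N$ sampled latent embeddings as balls and the cells of $\pi$ as bins. First, I would fix an arbitrary $\pi \in \text{supp}(\Pi_k)$. By \cref{def: k-shift}, every cell of a $(k,\vec{s})$-partition is a product of intervals on $[0,1]^d$ (taken mod 1) of total volume exactly $1/k^d$, so there are exactly $K := k^d$ cells, each of volume $1/K$. This step requires no randomness over $\pi$—the volume is determined purely by $k$ and $d$—which is why the statement can quantify over all $\pi \in \text{supp}(\Pi_k)$.

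Next, I would use the $\beta$-smoothness assumption to bound the probability that a single sample lands in any fixed cell $C$: since the density $f$ satisfies $\sup f \leq \beta$ by \cref{def: beta-smooth}, we have
\[
\Pr_{\vec{x} \sim \cD}[\vec{x} \in C] = \int_C f(\vec{y}) \, d\vec{y} \leq \beta \cdot \text{vol}(C) = \frac{\beta}{K}.
\]
Because the $N$ embeddings are sampled independently from $\cD$, assigning each sample to the cell of $\pi$ containing it is exactly a balls-into-bins process with $N$ balls, $K$ bins, and per-bin probability at most $\beta/K$.

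Finally, since $K = k^d = \Omega(N)$ by assumption, the hypotheses of \cref{lemma: balls-into-bins} are met. Invoking the lemma, the maximum number of samples in any cell is at most $\frac{3\beta \ln N}{\ln \ln N} = O(\beta \log N)$ with probability $1 - O(1/N)$ for $N$ large enough, which is exactly the claim. I do not expect any substantial obstacle here: the only subtlety is confirming that every cell of every $\pi \in \text{supp}(\Pi_k)$ really has volume $1/k^d$ despite cells being noncontiguous (wrapping modulo $1$ in each coordinate), but this follows from translation-invariance of Lebesgue measure on the torus.
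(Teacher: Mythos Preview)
Your proposal is correct and follows essentially the same approach as the paper: both treat the $N$ sampled vectors as balls and the $k^d$ cells of $\pi$ as bins, use $\beta$-smoothness to bound the per-cell landing probability by $\beta/k^d$, and then invoke \cref{lemma: balls-into-bins} with $K = k^d = \Omega(N)$ to conclude. Your extra remark about noncontiguous cells having volume $1/k^d$ by translation-invariance is a nice clarification the paper leaves implicit.
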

We now state this section's main theorem.
\begin{restatable}{theorem}{LocalDecomposition}
\label{theorem: local-decomp}
    Let $\cD$ be a $\beta$-smooth distribution, $\Delta = O(N^{-1/d})$, $\epsilon > 0$, and $k = \lceil\frac{\epsilon}{2d\Delta}\rceil.$ Then,
\begin{enumerate}
    \item (Decomposable) For any $G \in \textup{supp}(G(m, n, \cD, \Delta))$, each edge $e \in E(G)$ appears in $G(\pi)$ with probability at least $1 - \epsilon$ over the draw of $\pi \sim \Pi_k$.
    \item (Local) For any $\pi \in \textup{supp}(\Pi_k)$ and $N$ sufficiently large, the connected components of $G(\pi)$ are of size $O(\beta \log N)$ with probability $1 - O(1/N)$ over the draw of $G \sim G(m, n, \cD, \Delta)$.
\end{enumerate}
\end{restatable}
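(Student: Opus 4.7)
The plan is to prove the two parts in order, drawing directly on \cref{lemma: cut_prob} and \cref{cor: max-load}; the bulk of the work has already been done in those statements, and what remains is mainly a bookkeeping argument.

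For the decomposability claim (part 1), I would apply \cref{lemma: cut_prob} edge by edge. Let $e = (i,j) \in E(G)$. By definition of a b-RGG, $\lVert \vec{x_i} - \vec{x_j} \rVert_\infty \leq \Delta$. The side condition $\Delta \leq \epsilon/(2d)$ required by \cref{lemma: cut_prob} holds for $N$ large enough since $\Delta = O(N^{-1/d})$. By construction of $G(\pi)$, the edge $e$ is removed exactly when $\vec{x_i}$ and $\vec{x_j}$ lie in different cells of $\pi$, and \cref{lemma: cut_prob} says this happens with probability at most $\epsilon$. So $e$ survives in $G(\pi)$ with probability at least $1-\epsilon$.

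For the locality claim (part 2), the key observation is that, by construction of $G(\pi)$, every edge of $G(\pi)$ connects two nodes whose embeddings lie in the same cell of $\pi$. Consequently each connected component of $G(\pi)$ is contained in the set of nodes whose embeddings lie in a single cell, so the largest component size is upper-bounded by the maximum number of embeddings that land in any one cell. I would then bound that maximum load via \cref{cor: max-load}. To apply it I need to verify the hypothesis $k^d = \Omega(N)$: since $\Delta = O(N^{-1/d})$, we have $k = \lceil \epsilon/(2d\Delta) \rceil = \Omega(N^{1/d})$ and thus $k^d = \Omega(N)$. Because $\cD$ is $\beta$-smooth, its density is bounded by $\beta$, so the probability a given embedding lies in any fixed cell of volume $1/k^d$ is at most $\beta/k^d$, matching the per-bin probability hypothesis. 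The corollary then gives that, with probability $1 - O(1/N)$ over $G \sim G(m,n,\cD,\Delta)$, every cell contains $O(\beta \log N)$ embeddings, which in turn bounds every connected component size by $O(\beta \log N)$.

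I do not expect any significant obstacle: both parts follow essentially immediately from their corresponding ingredients. The only points that require care are the translation from the assumption $\Delta = O(N^{-1/d})$ into the two hypotheses of \cref{cor: max-load} (per-cell probability $\beta/k^d$ and $k^d = \Omega(N)$), and noting that the hypothesis $\Delta \leq \epsilon/(2d)$ of \cref{lemma: cut_prob} is absorbed into the ``$N$ sufficiently large'' regime we already require in part (2).
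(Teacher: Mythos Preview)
Your proposal is correct and follows essentially the same approach as the paper: part (1) is a direct application of \cref{lemma: cut_prob} using the b-RGG edge condition $\lVert \vec{x_i}-\vec{x_j}\rVert_\infty\le\Delta$, and part (2) invokes \cref{cor: max-load} after observing that connected components of $G(\pi)$ are confined to single cells. Your explicit verification of the hypotheses $k^d=\Omega(N)$ and $\Delta\le\epsilon/(2d)$ from the assumption $\Delta=O(N^{-1/d})$ is a nice bit of bookkeeping that the paper leaves implicit.
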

\begin{proof}[Proof sketch]
(1) follows from \cref{lemma: cut_prob} and (2) follows from  \cref{cor: max-load}.
\end{proof}

\subsection{Local approximation of value-to-go} \label{section: local-approx}
This section shows that the local decomposability of b-RGGs from \cref{theorem: local-decomp} means VTG can be approximated using \emph{local graph functions}. At a high level, these are functions that can be computed using only information from small neighborhoods.

\begin{definition}[\citet{Tahmasebi23:Power}]
\label{def:local}
    A function $f$ over graphs is \emph{$r$-local} if there is a function $\phi$ such that
    $f(G) = \phi(\{\mathcal{N}_r(v)\}_{v \in V(G)})$
where $\mathcal{N}_r(v)$ is the $r$-hop neighborhood of node $v$ in $G$.
\end{definition}

We prove that with high probability, VTG is approximated by a $O(\beta \log N)$-local function, formalized as follows.

\begin{definition}
    A function $f$ on graphs is \emph{$(r, \epsilon, \delta)$-locally approximable over a random graph family $\cG$} if there is an $r$-local, polynomial-time computable function $h$ such that
    $|f(G) - h(G)| \leq \epsilon f(G)$
    with probability $1-\delta$ over $G\sim \cG$ and any randomness in $h$.
\end{definition}

\paragraph{An initial (non-local) approximation.}
First, we prove that the VTG of an \RH instance can only decrease after removing the edges needed to form $G(\pi)$.

\begin{restatable}{lemma}{VtgUB}\label{lemma: vtg-upper-bound}
    For any $G \in \textup{supp}(G(m, n, \cD, \Delta)$ and any hypercube partition $\pi$,
$\cV(G(\pi)) \leq \cV(G).$
\end{restatable}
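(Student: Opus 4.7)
The plan is to prove the inequality by a straightforward backward induction on the timestep, using the fact that the only effect of the partition $\pi$ on $G$ is to remove edges while leaving the vertex set, the edge weights on surviving edges, and the arrival probabilities unchanged. In particular, $V(G(\pi)) = V(G)$ and $\mathcal{N}_{G(\pi)}(t) \subseteq \mathcal{N}_G(t)$ for every online node $t$. Intuitively, any feasible action of $\onOPT$ on $G(\pi)$ is a feasible action of $\onOPT$ on $G$ with the same immediate reward, so $\onOPT$ on $G$ can do at least as well.

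To formalize this, I would show by induction on $t$ (from $t = m+1$ downward) that
\[
\cV_{G(\pi)}(S, t) \;\leq\; \cV_G(S, t) \quad \text{for every } S \subseteq L \text{ and every } t \in \{1, \dots, m+1\}.
\]
The base cases are immediate from the boundary conditions: $\cV_{G(\pi)}(\emptyset, t) = 0 = \cV_G(\emptyset, t)$ and $\cV_{G(\pi)}(S, m+1) = 0 = \cV_G(S, m+1)$. For the inductive step, assume the inequality holds at time $t+1$ for every $S' \subseteq L$. Applying the Bellman recurrence to both sides, it suffices to verify two things. First, $(1-\vec{p}_t)\cV_{G(\pi)}(S, t+1) \leq (1-\vec{p}_t)\cV_G(S, t+1)$ by the inductive hypothesis. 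Second, for the maximization term, the inductive hypothesis yields $\cV_{G(\pi)}(S, t+1) \leq \cV_G(S, t+1)$ and, for every $u \in \mathcal{N}_{G(\pi)}(t)\cap S$, the bound $w_{tu} + \cV_{G(\pi)}(S\setminus\{u\}, t+1) \leq w_{tu} + \cV_G(S\setminus\{u\}, t+1)$. Since $\mathcal{N}_{G(\pi)}(t)\cap S \subseteq \mathcal{N}_G(t)\cap S$, taking the maximum over the smaller neighbor set on the left and the larger neighbor set on the right preserves the inequality.

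Combining the two bounds inside the Bellman recurrence gives $\cV_{G(\pi)}(S, t) \leq \cV_G(S, t)$, completing the induction. Specializing to $S = L(G)$ and $t = 1$ yields the claim $\cV(G(\pi)) = \cV_{G(\pi)}(L, 1) \leq \cV_G(L, 1) = \cV(G)$. I do not expect any substantive obstacle: this is a monotonicity-in-edges statement for the VTG function, and the induction goes through by directly comparing the two Bellman recurrences term by term. The only point that deserves a sentence of care is that removing edges cannot create new match options, so the $\max$ on the $G(\pi)$ side is taken over a subset of the match options available on the $G$ side.
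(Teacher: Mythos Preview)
Your proof is correct and takes essentially the same approach as the paper: both expand the Bellman recurrence, apply the inductive hypothesis to the time-$(t+1)$ terms, and use $\mathcal{N}_{G(\pi)}(t) \subseteq \mathcal{N}_G(t)$ to handle the maximization. The only cosmetic difference is that the paper phrases the induction over the number of online nodes (via its earlier observation that $\cV_G(S,t)$ equals a full VTG on a subgraph), whereas you induct directly on the timestep $t$, which is arguably cleaner.
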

\begin{proof}[Proof sketch] We show this via induction on the number of online nodes. When $G$ has no online nodes, $\cV(G) = \cV(G(\pi)) = 0$.
The inductive step uses the one-step DP representations of $\cV(G) = \cV_G(L, 1)$ and $\cV(G(\pi)) = \cV_{G(\pi)}(L, 1)$. In particular, if $G$ has $t$ online nodes, then $\cV_G(L, 2)$ and $\cV_G(L \setminus \{u\}, 2)$ are full VTG computations on subgraphs of $G$ with $t-1$ online nodes. Applying the inductive hypothesis gives that $\cV_G(L, 2) \geq \cV_{G(\pi)}(L, 2)$ and $\cV_G(L, 2) \geq \cV_{G(\pi)}(L \setminus \{u\}, 2)$. Along with the fact that $\mathcal{N}_{G(\pi)}(1) \subseteq \mathcal{N}_G(1)$ as $G(\pi)$ is formed from $G$ by removing edges, these bounds imply $\cV(G) \geq \cV(G(\pi))$.
\end{proof}

To establish a lower approximation bound, given $G \in \textup{supp}(G(m,n,\cD,\Delta))$, we show that because each edge in $G$ exists in $G(\pi)$ for a $1 - \epsilon$ fraction of random $\lceil \frac{\epsilon}{2d\Delta}\rceil$-partitions $\pi$, $\cV(G(\pi))$ is at least $(1-\epsilon)\cV(G)$ in expectation.
\begin{restatable}{lemma}{VtgLB}\label{lemma: vtg-lower-bound}
    For any $G \in \textup{supp}(G(m, n, \cD, \Delta))$, $\epsilon > 0$, and $k = \lceil\frac{\epsilon}{2d\Delta}\rceil,$
    $\E_{\pi \sim \Pi_k}\left[ \cV(G(\pi)) \right] \geq (1-\epsilon)\cV(G).$
\end{restatable}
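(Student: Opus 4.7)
The plan is to derive the bound from an algorithmic coupling rather than from direct manipulation of the Bellman recursion. Let $\mathcal{A}$ denote the optimal online algorithm for $G$, so that $\cV(G) = \E_\sigma[w(M_\mathcal{A}(\sigma))]$, where $\sigma$ is the random sequence of arrivals and $M_\mathcal{A}(\sigma)$ is the matching that $\mathcal{A}$ returns on $\sigma$. For each $\pi$, I would define a derived online algorithm $\mathcal{A}_\pi$ on the graph $G(\pi)$ as follows: when online node $t$ arrives, simulate $\mathcal{A}$'s decision on the same history (using $G$, which both algorithms know); if $\mathcal{A}$ chooses to skip, then $\mathcal{A}_\pi$ skips, and if $\mathcal{A}$ chooses to match $t$ to $u$, then $\mathcal{A}_\pi$ matches $t$ to $u$ provided $(t,u)\in E(G(\pi))$ and $u$ is still free, and otherwise skips. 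Since $\pi$ and $G(\pi)$ are known to an algorithm operating on $G(\pi)$, this is a legitimate online algorithm for $G(\pi)$.

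Next, I would establish the key invariant by induction on the timestep: the set of offline nodes still unmatched under $\mathcal{A}_\pi$ is always a superset of the set still unmatched under $\mathcal{A}$ on the same arrival sequence. The base case is immediate, and the inductive step follows because $\mathcal{A}_\pi$'s matching events form a subset of $\mathcal{A}$'s matching events. A direct consequence is that whenever $\mathcal{A}$ matches $t$ to $u$, the node $u$ is certainly available to $\mathcal{A}_\pi$, so $\mathcal{A}_\pi$ matches $t$ to $u$ \emph{if and only if} $(t,u) \in E(G(\pi))$. This gives the pointwise identity
\[
 w(M_{\mathcal{A}_\pi}(\sigma)) \;=\; \sum_{(t,u)\in M_\mathcal{A}(\sigma)} w_{tu}\cdot \Ind[(t,u)\in E(G(\pi))].
\]

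Finally, I would take expectations over both $\pi \sim \Pi_k$ and $\sigma$. By \cref{lemma: cut_prob}, any edge $(t,u)\in E(G)$ satisfies $\lVert\vec{x}_t - \vec{x}_u\rVert_\infty \leq \Delta$, and hence belongs to $E(G(\pi))$ with probability at least $1-\epsilon$ over $\pi$. Since all weights and matchings are bounded, Fubini's theorem permits swapping the two expectations, yielding
\[
 \E_\pi\E_\sigma[w(M_{\mathcal{A}_\pi}(\sigma))] \;\geq\; (1-\epsilon)\,\E_\sigma[w(M_\mathcal{A}(\sigma))] \;=\; (1-\epsilon)\cV(G).
\]
Because $\cV(G(\pi))$ is the value of the \emph{optimal} online algorithm on $G(\pi)$ and $\mathcal{A}_\pi$ is some online algorithm for $G(\pi)$, we have $\cV(G(\pi)) \geq \E_\sigma[w(M_{\mathcal{A}_\pi}(\sigma))]$, and combining with the displayed inequality yields the claim.

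The main obstacle is the careful definition of the coupled algorithm $\mathcal{A}_\pi$ and verifying that it acts online (it may only condition on observed arrivals and the known graph $G(\pi)$, not on the future arrivals of $\sigma$), together with the availability-superset invariant, which is what turns the edge-survival probability into a lower bound on $\E_\pi[\cV(G(\pi))]$. Once these are in place, the probabilistic content reduces to a one-line application of \cref{lemma: cut_prob}.
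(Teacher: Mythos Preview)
Your proposal is correct and takes essentially the same approach as the paper: both argue that simulating $\onOPT$ on $G$ and retaining only those matches whose edges survive in $G(\pi)$ defines a feasible online algorithm on $G(\pi)$, then invoke \cref{lemma: cut_prob} edge-by-edge. The paper phrases this via the edge decomposition $\cV(G)=\sum_e \alpha_e w_e$ and the one-line claim that $\sum_e \alpha_e w_e\,\Ind\{e\in E(G(\pi))\}$ is the value of ``an online algorithm on $G(\pi)$ which outputs $M(\vec{a})\cap E(G(\pi))$'', whereas you spell out the availability-superset invariant that justifies this claim; the arguments are otherwise identical.
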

\begin{proof}[Proof sketch] First we show that $\cV(G)$ decomposes across edges. Recall that $\cV(G)$ is the expected value of $\onOPT(G)$. Given an arrival sequence $\vec{a} \in \{0,1\}^m$, $\onOPT$ outputs a deterministic matching $M(\vec{a})$. Therefore,
\[\cV(G) = \sum_{\vec{a} \in \{0,1\}^m} \left(\Pr[\vec{a} ]\sum_{e \in M(\vec{a} )} w_e\right).\]
After rearranging, we find $\cV(G) = \sum_{e \in E} \alpha_e w_e$ for
\[\alpha_e =\sum_{\vec{a}  \in \{0,1\}^m \; : \; e \in M(\vec{a} )} \Pr[\vec{a} ].\]
Crucially, for any hypercube partition $\pi$,
\begin{equation} \label{eq: edge-decomp-bound}
    \cV(G(\pi)) \geq \sum_{e \in E(G)} \alpha_e w_e \cdot 1\{e \in E(G(\pi))\}.
\end{equation}
The right-hand side is the expected value of the matching returned by an online algorithm on $G(\pi)$ which, for any $\vec{a}$, outputs $M(\vec{a}) \cap E(G(\pi))$. The left-hand side is the expected value of \onOPT$(G(\pi))$. The lemma statement then follows from Equation (\ref{eq: edge-decomp-bound}) and \cref{theorem: local-decomp}.
\end{proof}

Lemmas~\ref{lemma: vtg-upper-bound} and \ref{lemma: vtg-lower-bound} show that the function $\tilde{\cV}(G) = \E_{\pi \sim \Pi_k}\left[ \cV(G(\pi))\right]$ can approximate $\cV(G)$ with high accuracy over any b-RGG, but $\tilde{\cV}(\cdot)$ may not be $r$-local for any reasonable value of $r$. The main obstacle to overcome is that this would require the connected components of $G(\pi)$ to be of size at most $r$ under all partitions $\pi \in \textup{supp}(\Pi_k)$.
\paragraph{Local approximation via Monte Carlo estimation.}
The key observation is that unlike $\tilde{\cV}(\cdot)$, a random function that samples $\ell$ partitions $\pi_1, \dots, \pi_\ell \sim \Pi_k$ and outputs the sample estimate $\frac{1}{\ell}\sum_{i=1}^\ell \cV(G(\pi_i))$ achieves $r$-locality by only requiring that the connected components of $G(\pi_1), \dots, G(\pi_\ell)$ are of size at most $r$. We show that it is possible to choose $\ell$ such that with high probability over the joint draw of $G$ and the draw of $\ell$ partitions, the sample estimate is sufficiently accurate and still a local function.

\begin{restatable}{lemma}{localApprox} \label{lemma: local-approx}
     Let $\cD$ be a $\beta$-smooth distribution and $\Delta = O(N^{-1/d})$ where $N$ is sufficiently large. For all $\epsilon \in (0, 1/2]$, let $k=\lceil \frac{\epsilon}{2d\Delta} \rceil$. With probability $1-\delta$ over the draw of $G \sim G(m,n,\cD, \Delta)$ and the draw of $\ell\geq \frac{2}{\epsilon^2}\log\frac{4}{\delta}$ partitions $\pi_1, \dots, \pi_\ell\sim \Pi_k$, each $\cV(G(\pi_i))$ can be computed by a $O(\beta \log N)$-local function and
     \[\frac{1}{\ell} \sum_{i=1}^\ell \cV(G(\pi_i)) \geq (1 - \epsilon) \underset{\pi \sim \Pi_k}{\E}[\cV(G(\pi))].\]
\end{restatable}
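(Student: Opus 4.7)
The plan is to split the analysis into two high-probability events: a \emph{locality event} $A$, that every sampled graph $G(\pi_i)$ has small connected components, and a \emph{concentration event} $B$, that the sample average $\frac{1}{\ell}\sum_{i=1}^\ell \cV(G(\pi_i))$ is within a $(1-\epsilon)$ factor of its expectation $\E_{\pi \sim \Pi_k}[\cV(G(\pi))]$. I would condition on $G$ when bounding $\Pr[\bar B]$ so the resulting bound is uniform in $G$, and invoke the uniformity over $\pi$ already built into \cref{cor: max-load} when bounding $\Pr[\bar A]$ to avoid an extra union bound over the $\ell$ partitions. A union bound on $\bar A$ and $\bar B$ then gives the lemma.

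For event $A$: by construction of $G(\pi)$, every edge connects two nodes whose embeddings lie in the same cell of $\pi$, so every connected component of $G(\pi)$ is confined to the nodes of a single cell. \cref{cor: max-load} gives that with probability $1 - O(1/N)$ over $G\sim G(m,n,\cD,\Delta)$, every cell of every $\pi \in \textup{supp}(\Pi_k)$ contains $O(\beta \log N)$ embeddings, so on this event every $G(\pi_i)$ has connected components of size $O(\beta \log N)$. VTG is additive across the connected components of a graph---arrivals are independent and the recurrence in \cref{alg: vtg-dp} does not couple matching decisions across components, which a short induction on $m$ formalizes---so $\cV(G(\pi_i))$ is a sum of component VTGs, each computable by DP in time $2^{O(\beta\log N)} = \poly(N)$. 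Because each component sits inside the $O(\beta \log N)$-hop neighborhood of each of its nodes, this procedure realizes a polynomial-time $O(\beta\log N)$-local function. For $N$ sufficiently large, $\Pr[A] \geq 1 - 3\delta/4$.

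For event $B$: conditioning on $G$, the variables $X_i := \cV(G(\pi_i))$ are i.i.d., take values in $[0, \cV(G)]$ by \cref{lemma: vtg-upper-bound}, and have common mean $\mu := \E_{\pi \sim \Pi_k}[\cV(G(\pi))] \geq (1-\epsilon)\cV(G) \geq \cV(G)/2$ by \cref{lemma: vtg-lower-bound} and $\epsilon \leq 1/2$. Hoeffding's inequality applied with deviation $t = \epsilon \mu$ gives
\[
\Pr\left[\frac{1}{\ell}\sum_{i=1}^\ell X_i \leq (1-\epsilon)\mu \;\mid\; G\right] \leq \exp\left(-\frac{2\ell\epsilon^2 \mu^2}{\cV(G)^2}\right) \leq \exp\left(-\frac{\ell\epsilon^2}{2}\right),
\]
which is at most $\delta/4$ once $\ell \geq \frac{2}{\epsilon^2}\log(4/\delta)$. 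Since this bound is uniform in $G$, it holds unconditionally, and a union bound with $A$ gives the claimed $1-\delta$ probability. The main subtlety is threading the conditional probabilities: Hoeffding must be applied after conditioning on $G$ so that $\cV(G)$ acts as a deterministic scale, and the resulting $r$-local function (per \cref{def:local}) reads off only the local views of $G(\pi_i)$---not of $G$---so the partition's randomness does not inflate the locality parameter.
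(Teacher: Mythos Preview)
Your overall strategy matches the paper's: split into a locality event and a concentration event, condition on $G$ for Hoeffding (using \cref{lemma: vtg-upper-bound} to bound the range and \cref{lemma: vtg-lower-bound} to lower bound the mean by $\cV(G)/2$), and then union bound. The Hoeffding calculation and the explanation of why component-wise VTG computation is $O(\beta \log N)$-local are correct and in fact more explicit than the paper's version.

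The one substantive issue is your claim to avoid a union bound over the $\ell$ partitions by invoking ``the uniformity over $\pi$ already built into \cref{cor: max-load}.'' That reads the corollary backwards. \cref{cor: max-load} (and its restatement in \cref{theorem: local-decomp}(2)) is a per-$\pi$ guarantee: for each fixed $\pi \in \textup{supp}(\Pi_k)$, with probability $1 - O(1/N)$ over the random embeddings every cell has $O(\beta\log N)$ points. Its proof treats a \emph{single} partition as the bins of a balls-into-bins process; nothing there controls all shifts simultaneously, and since $\textup{supp}(\Pi_k)$ is uncountable, a naive union bound over all $\pi$ would not work either. The paper handles this exactly by conditioning on $\vec{\pi}$ and union bounding over only the $\ell$ sampled partitions, giving $\Pr_{G}[\bar A \mid \vec{\pi}] \leq O(\ell/N) \leq \delta/2$ for $N$ sufficiently large (which is fine since $\ell$ depends only on $\epsilon,\delta$). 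With that correction in place, your $3\delta/4$--$\delta/4$ split works just as well and the rest of your argument goes through unchanged.
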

\begin{proof}[Proof sketch]
To simplify notation, let $\vec{\pi} = \{\pi_1, \dots, \pi_\ell\}$, $\mathcal{G} = G(m,n,\cD, \Delta)$, $S_\ell(G, \vec{\pi}) = \frac{1}{\ell}\sum_{i=1}^\ell \cV(G(\pi_i))$, and $\bar{S}(G) = \E_{\pi}[\cV(G(\pi))]$. For $G \in \textup{supp}(\cG)$ and $\vec{\pi} \in \textup{supp}(\Pi_k)$,
let $A(G, \vec{\pi})$ be the event that $S_\ell(G, \vec{\pi}) \geq (1 - \epsilon) \bar{S}(G)$. Moreover, let $B(G, \vec{\pi})$ be the event the connected components of $G(\pi_i)$ are of size $O(\beta \log N)$ for each partition $\pi_i \in \vec{\pi}$. When this is the case, $\cV(G(\pi_i))$ can be computed exactly by a $O(\beta \log N)$-local function.

    By the tower property, we can rewrite
    \begin{align*}
       \underset{G \sim \cG, \vec{\pi} \sim \Pi_k}{\Pr}\left[A(G, \vec{\pi})^\complement\right] &= \underset{G \sim \cG}{\E}\left[ \Pr_{\vec{\pi} \sim \Pi_k}[A(G, \vec{\pi})^\complement \; | \; G]\right]\\
       \underset{G \sim \cG, \vec{\pi} \sim \Pi_k}{\Pr}\left[B(G, \vec{\pi})^\complement\right] &= \underset{\vec{\pi} \sim \Pi_k}{\E}\left[ \Pr_{G \sim \cG}[B(G, \vec{\pi})^\complement \; | \; \vec{\pi}]\right].
    \end{align*}
    
\begin{figure*}[!htb]
\centering
    \includegraphics[width=\textwidth]{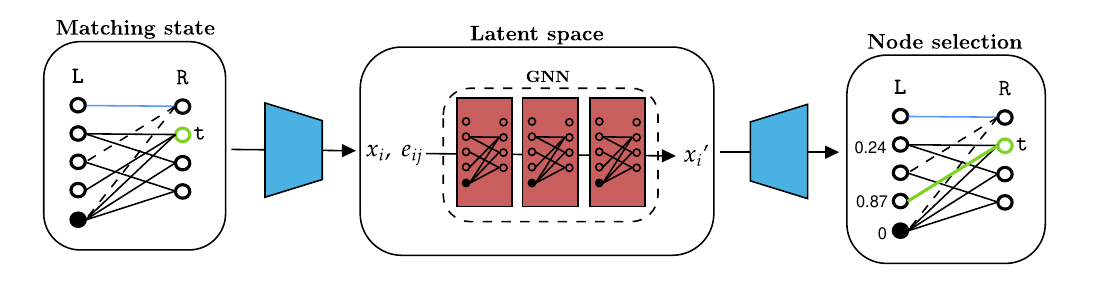}
    \caption{\MG's GNN-based matching subroutine.}
    \label{figure:gnn_algo}
\vskip 0in
\end{figure*}

 By \cref{theorem: local-decomp} and a union bound over the $\ell$ partitions, 
    \begin{equation}\underset{G \sim \cG}{\Pr}\left[B(G, \vec{\pi})^\complement \; | \; \vec{\pi}\right] \leq O\left(\frac{\ell}{N}\right) \leq \frac{\delta}{2}\label{eq:bound_B}\end{equation}    for $N$ sufficiently large.

    Next, for fixed $G \in \textup{supp}(\cG)$ and $\vec{\pi} \sim \Pi_k$, the $\cV(G(\pi_i))$'s are i.i.d. random variables which take values in the interval $[0, \cV(G)]$ by \cref{lemma: vtg-upper-bound}. A Hoeffding bound then implies that $\Pr_{\vec{\pi}\sim \Pi_k}[A(G, \vec{\pi})^\complement \; | \; G] \leq \delta/2$. This bound and Equation~\eqref{eq:bound_B} imply the lemma statement.
\end{proof}
     
Finally, we give our main theorem.
\begin{restatable}{theorem}{mainTheorem}
\label{theorem:vtg_local_approx}
     Given a $\beta$-smooth distribution $\cD$ over $[0,1]^d$ and $\Delta = O(N^{-1/d})$, for sufficiently large $N$, the VTG function $\cV$ is $\left(O(\beta \log N\right), \epsilon, \delta)$-locally approximable over $G(m, n, \cD, \Delta)$ for all $\epsilon \in (0, \frac{1}{2}]$ and $\delta \in (0,1].$
\end{restatable}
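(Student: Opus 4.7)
The plan is to show that the function $h(G) = \frac{1}{\ell}\sum_{i=1}^\ell \cV(G(\pi_i))$, for $\ell$ i.i.d.\ partitions $\pi_1, \dots, \pi_\ell \sim \Pi_k$ with $k = \lceil \frac{\epsilon'}{2d\Delta}\rceil$ and $\epsilon' = \epsilon/2$, serves as the required $r$-local polynomial-time approximator. This is the same estimator analyzed in \cref{lemma: local-approx}; the task is to assemble the three earlier lemmas and track constants so that the final guarantee matches the two-sided form $|\cV(G) - h(G)| \leq \epsilon \cV(G)$ demanded by the definition of local approximability.

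First, I would apply \cref{lemma: vtg-upper-bound} summand-by-summand to obtain the (deterministic) upper bound $h(G) \leq \cV(G)$, which already handles one side of the inequality. For the lower bound, \cref{lemma: local-approx} instantiated with $\epsilon'$ and $\delta$ guarantees that, with probability $1-\delta$ over the joint draw of $G \sim G(m,n,\cD,\Delta)$ and $\vec{\pi} \sim \Pi_k^\ell$, both (i) $h(G) \geq (1-\epsilon')\, \E_{\pi\sim\Pi_k}[\cV(G(\pi))]$ and (ii) every $\cV(G(\pi_i))$ can be computed by an $O(\beta \log N)$-local function on $G$. Chaining (i) with \cref{lemma: vtg-lower-bound}, which yields $\E_{\pi\sim\Pi_k}[\cV(G(\pi))] \geq (1-\epsilon')\cV(G)$, gives
\[h(G) \;\geq\; (1-\epsilon')^2\,\cV(G) \;\geq\; (1-2\epsilon')\,\cV(G) \;=\; (1-\epsilon)\,\cV(G),\]
where the second inequality uses $\epsilon' \leq 1/2$. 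Combining the two directions establishes $|\cV(G)-h(G)| \leq \epsilon \cV(G)$ on the same event of probability $1-\delta$.

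For the locality claim, since each $\cV(G(\pi_i))$ is $O(\beta \log N)$-local in $G$ — deciding whether an edge lies in $G(\pi_i)$ requires only the embeddings of its endpoints, and the VTG DP on each resulting connected component only consults nodes within that component — the arithmetic mean $h$ inherits the same locality. Polynomial-time computability follows because $\ell = O(\epsilon^{-2}\log \delta^{-1})$ is polynomial in the input parameters, and the VTG DP on a component of size $O(\beta \log N)$ terminates in time $2^{O(\beta\log N)} = \poly(N)$ whenever $\beta$ is bounded by a constant (more generally polylogarithmic in $N$).

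I do not anticipate a genuine obstacle: the theorem is essentially a clean corollary of \cref{lemma: local-approx} composed with Lemmas \ref{lemma: vtg-upper-bound} and \ref{lemma: vtg-lower-bound}. The only subtleties worth writing out carefully are (a) choosing $\epsilon' = \epsilon/2$ so that $(1-\epsilon')^2 \geq 1-\epsilon$, (b) verifying that the union-bound event in \cref{lemma: local-approx} simultaneously controls the Hoeffding concentration and the per-partition component-size bound, and (c) confirming that $h$ remains $O(\beta\log N)$-local after the component-wise DP, since the exponential-time DP is executed locally within each component rather than over the whole graph.
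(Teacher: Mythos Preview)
Your proposal tracks the paper's argument closely: combine \cref{lemma: vtg-upper-bound}, \cref{lemma: vtg-lower-bound}, and \cref{lemma: local-approx} with a rescaled accuracy parameter so that the two $(1-\epsilon')$ factors compose to $1-\epsilon$. The paper takes $\epsilon' = 1-\sqrt{1-\epsilon}$ to hit equality; your choice $\epsilon'=\epsilon/2$ gives $(1-\epsilon')^2 \ge 1-\epsilon$ just as well.

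There is, however, one genuine technical gap in how you define $h$. You set $h(G)=\frac{1}{\ell}\sum_{i}\cV(G(\pi_i))$ and then argue that $h$ is $O(\beta\log N)$-local and polynomial-time because, on the good event from \cref{lemma: local-approx}, every $G(\pi_i)$ has components of size $O(\beta\log N)$. But the definition of $(r,\epsilon,\delta)$-local approximability requires $h$ itself to be $r$-local and polynomial-time computable; only the \emph{accuracy} guarantee is permitted to fail on the $\delta$-probability bad event. On a bad draw of $(G,\vec\pi)$ some $G(\pi_i)$ may have a component of linear size, and then your $h$ is neither $O(\beta\log N)$-local nor poly-time on that input. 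Your item (c) notices the issue but does not resolve it.

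The paper fixes this by defining $h$ to output $\frac{1}{|I|}\sum_{i\in I}\cV(G(\pi_i))$, where $I\subseteq[\ell]$ is the set of sampled partitions for which all components of $G(\pi_i)$ have size $O(\beta\log N)$. This $h$ is unconditionally $O(\beta\log N)$-local and poly-time, and on the good event $I=[\ell]$, so $h$ coincides with your sample mean and your accuracy chain goes through verbatim. The repair is a one-line change to the definition of $h$; the rest of your argument is correct.
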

\begin{proof}[Proof sketch]
Let $k = \lceil\frac{\epsilon}{2d\Delta}\rceil$ and $\epsilon' = 1 - \sqrt{1 - \epsilon}$ so $(1 - \epsilon')^2 = 1- \epsilon$. By \cref{lemma: vtg-lower-bound} with $\epsilon = \epsilon'$, 
$\E_{\pi \sim \Pi_k}[\cV(G(\pi))] \geq (1 -\epsilon') \cV(G)$
 for all $G \in \textup{supp}(G(m,n,\cD, \Delta))$. By \cref{lemma: local-approx}, $\E[\cV(G(\pi))]$ can be approximated to error $1 -\epsilon'$ via a local Monte Carlo estimate. This implies the theorem statement.
\end{proof}

\section{Experiments}
In this section, we introduce \MG\footnote{\url{https://github.com/anders-wikum/GNN-OBM}} and demonstrate its performance against state-of-the-art baselines across a broad range of graph families and problem regimes. We describe our model and experimental setup in \cref{section:model,section:experimental setup}, then present results from experiments in \cref{section:results}.
More implementation details can be found in \cref{appendix:experimental_details}.

\subsection{Model}
\label{section:model}

\paragraph{Learned matching model.}
Recall that for an \RH input $G$, arriving online node $t$, and set of offline nodes $S$, the online optimal algorithm \onOPT skips $t$ if  $\cV_G(S, t + 1) > \max\{\cV_G(S, t, u) : u \in \mathcal{N}_G(t)\cap S\}$ and otherwise matches $t$ to the neighbor that maximizes $\cV_G(S, t, u)$. Fundamentally, \onOPT is a greedy algorithm with respect to VTG. \MG replaces the VTG computations in \onOPT with approximations from a GNN. See \cref{figure:gnn_algo} for an overview. To start, the current \emph{matching state} is encoded as an attributed graph. A matching state includes the input graph $G$, arriving online node $t$, and set of available offline nodes $S$. This attributed graph is fed into a GNN, which outputs an approximate VTG associated with each feasible action. Finally, the decision with the highest predicted VTG is chosen. This process is repeated on all matching states encountered while running on $G$ until no online nodes remain.

\paragraph{Training protocol.}
The GNN underlying \MG is trained to approximate VTG via supervised learning. Since computing targets involves solving an exponential-sized DP, we construct a training set of 2000 instances on 16 nodes. We generate matching states and targets from each instance via teacher forcing by following the decisions of \onOPT.
\begin{figure*}[t]
\vskip 0.15in
    \centering
    \includegraphics[width=0.95\textwidth]{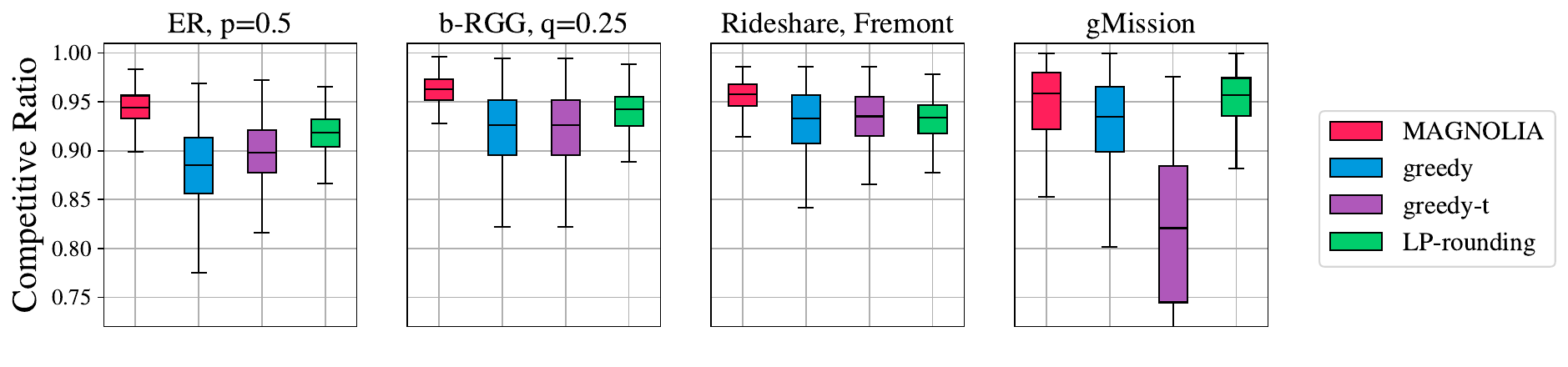}
    \caption{Boxplot showing the distribution of competitive ratios for \MG and baselines across graph configurations. All graphs are of size (10\x20), and results for additional configurations are available in \cref{appendix: base_more_params}. }
    \label{figure:baseline}
\vskip -0.2in
\end{figure*}

\begin{figure*}[t]
\vskip 0.2in
\centering
    \includegraphics[width=0.95\textwidth]{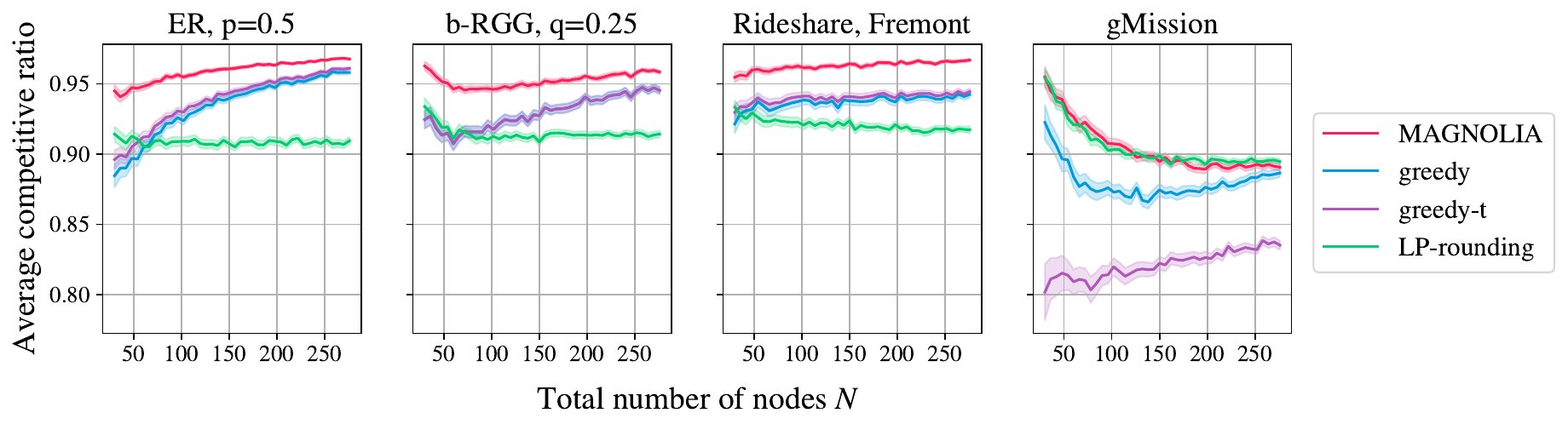}
    \caption{Evolution of competitive ratio over graphs of increasing size for a GNN trained on graphs of size (6\x10). All test graphs have a 2:1 ratio of online to offline nodes. Because the threshold $t$ for \greedyt is selected from a validation set over diverse graph configurations, it does not perform equally well on all configurations. Results for additional graph configurations are in \cref{appendix: size_more_params}}
    \label{figure:generalization_graph_size}
\vskip -0.1in
\end{figure*}
\subsection{Experimental setup}
\label{section:experimental setup}

\paragraph{Instance generation.}
Training and testing are done on synthetic and semi-synthetic inputs comprised of a weighted bipartite graph and arrival probabilities $p_t \sim U(0, 1)$. We generate synthetic bipartite graphs drawn from the Erd{\H o}s-R{\'e}nyi (ER) \cite{Erdos60:Evolution}, Barab{\'a}si-Albert (BA) \cite{Albert02:Statistical}, and geometric (b-RGG) random graph families. To simulate performance on real-world crowdsourcing tasks, we generate semi-synthetic graphs from OSMnx \cite{Boeing17:OSMnx}, a library that encodes road networks for use in ride-sharing applications, and the gMission dataset \cite{Chen14:gMission}, whose base graph comes from crowdsourcing data for assigning workers to tasks. Details on bipartite graph generation are in \cref{appendix: graph-generation}.

In terms of notation, we say a bipartite graph has shape $(|L| \text{\x} |R|)$ if it has $|L|$ offline nodes and $|R|$ online nodes. A graph \emph{configuration} refers to a graph family and fixed set of generating parameters (e.g., ER with $p=0.5$).

\paragraph{Evaluation.} 
Given an input graph $G$, a matching model $M$ outputs a sequence of matching decisions as the arrival $\vec{a}_t \in \{0,1\}$ of each online node $t$ is revealed. We evaluate the performance of $M$ on $G$ by taking the average \emph{competitive ratio} over $\ell$ realizations of the arrival vector $\vec{a}$:
\[\CR(M, G) = \frac{1}{\ell}\sum_{j=1}^\ell \frac{M(G, \vec{a^{(j)}})}{\OPT(G, \vec{a^{(j)}})}.\]
    Here $M(G, \vec{a})$ is the weight of the matching returned by $M$ on graph $G$ with arrival sequence $\vec{a}$, and $\OPT(G, \vec{a})$ is the weight of the max-weight matching in $G$ based on \emph{a priori} knowledge of $\vec{a}$. Similarly, we evaluate the performance of $M$ over a graph configuration $\cG$ by averaging the input-wise competitive ratio $\CR(M, G)$ over many input graphs $G \sim \cG$.
All competitive ratios are computed with respect to the offline optimal algorithm $\OPT$, since computing \onOPT is intractable for graphs of reasonable size.

\paragraph{Baselines.} 
We compare our learned models to several strong baselines. Upon the arrival of an online node, \greedy picks the maximum weight available edge. To better trade off between short and long-term rewards, \greedyt \cite{Alomrani22:Deep}, makes the same decision as \greedy if the maximum edge value is above some threshold $t$, and skips otherwise. This threshold parameter is tuned to maximize the average competitive ratio over a diverse validation set. Finally, \LP is an  \RH approximation algorithm by \citet{Braverman22:Max-weight}, which achieves an approximation ratio of $0.632$ and outperforms the current best-published approximation algorithm \citep{Naor23:Online} in practice.

To augment the potential value of \MG in practice, a central tenant in its design is that it should demonstrate strong out-of-distribution generalization to unseen graph configurations. For example, to obtain strong performance on ER graphs of a certain density, \MG should not need to be trained on ER graphs with that density. As such, because the approach by \citet{Alomrani22:Deep} learns a tailored matching policy per graph configuration, we do not consider it as a baseline. Moreover, it is difficult to make a fair comparison---as an RL-based approach, \citet{Alomrani22:Deep} requires more training data, larger training graph sizes, and more compute time than \MG to learn an effective policy. Thus, any comparison with a shared training set would be ill-suited to their method.

\subsection{Results}
\label{section:results}

\begin{figure*}[tb]
\vskip 0.15in
    \centering
    \includegraphics[width=0.95\textwidth]{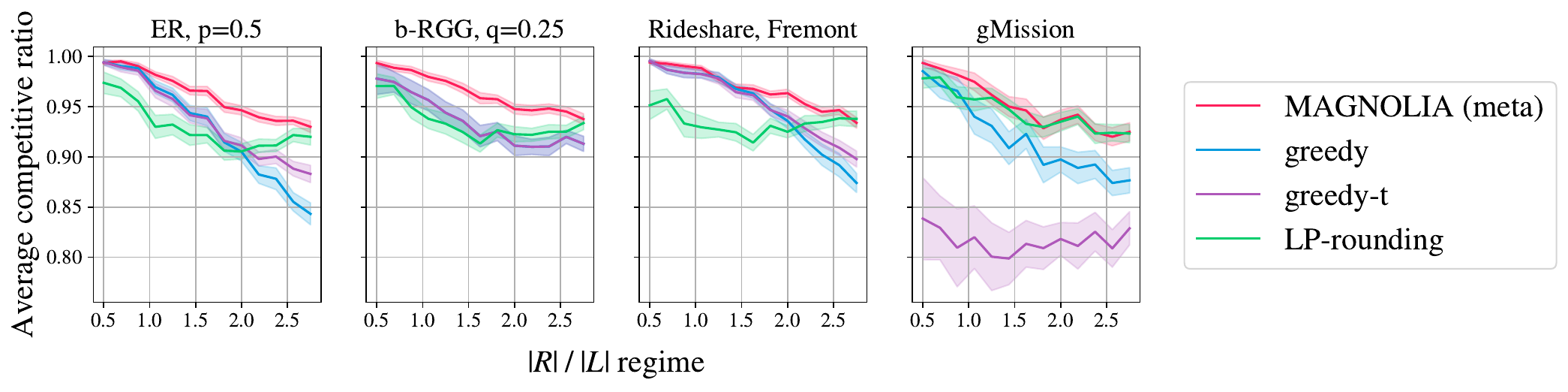}
    \caption{Evolution of competitive ratio over regimes for \MG enabled with a meta-GNN. For evaluation, $|L|$ is kept fixed at 16 offline nodes, and $|R|$ varies from 8 to 45 online nodes. Results for additional graph configurations are available in \cref{appendix: meta-models}.}
    \label{figure:meta_plots}
\vskip -0.15in
\end{figure*}

\paragraph{\MG makes good decisions.}
We train the GNN underlying \MG on a collection of 2000 \RH instances from 3 graph configurations, then evaluate performance on 6000 unseen instances from a broader selection of 12 configurations. \cref{figure:baseline} shows the distribution of competitive ratios for a representative sample of configurations. On these and others, \MG consistently achieves an average CR that is 2-5\% higher than the best baseline.

\paragraph{\MG shows size generalization.}
An important characteristic of GNNs trained for combinatorial tasks is the extent to which they generalize to graphs of larger size. Despite being trained exclusively on 16-node graphs, we see in \cref{figure:generalization_graph_size} that  \MG's performance largely remains consistent for inputs that are up to 18 times larger. For the single graph configuration where performance degrades as a function of graph size, the decline is in line with that of \texttt{LP-rounding}, whose approximation guarantees are invariant to graph size. 

\paragraph{Meta-models can improve regime generalization.} We observe that the ratio $|R|/|L|$ of online to offline nodes in an input materially impacts the performance of \RH algorithms. In light of this, we also study how \MG's performance generalizes across ratios $|R| / |L|$, or \emph{regimes}. Though \MG performs well on regimes similar to those its GNN was trained on, this does not translate to ones with different dynamics; a model trained in an ``offline-heavy" regime  ($|R| / |L| < 1$) tends to perform poorly in an ``online-heavy'' regime ($|R|/|L| > 1$), and~conversely. 

A natural solution is to replace \MG's GNN with a meta-model that selects between multiple GNNs---each trained on different regimes---on an instance-by-instance basis. In our experiments, we use a meta-GNN that selects between two GNNs trained on graph sizes (10\x6) and (6\x10) based on the predicted competitive ratio. \cref{figure:meta_plots} gives regime generalization results for \MG for this meta-model. We find that the meta-GNN recovers a simple configuration-dependent threshold rule on $|L|/|R|$. See the ablation tests in \cref{appendix: meta-models} for a comparison against a purely threshold-based meta-model. Our model consistently outperforms greedy baselines, and while \texttt{LP-rounding} is eventually better in very online-heavy regimes which are out-of-distribution for the GNN, \MG performs especially well in more balanced regimes where \texttt{LP-rounding} is worst.

\paragraph{\MG is robust to noisy inputs.}
One criticism of the $\RH$ model is that exact knowledge of both the underlying graph and arrival probabilities is impractical. Instead, it is often more reasonable to assume access to noisy estimates of these values coming from data or an ML model. A key question, then, is how robust these $\RH$ algorithms are to training and testing on noisy input. To test this, we run several experiments with varying levels of noise: for level $\rho$, $\mcN(0, \rho^2)$ noise is added independently to each edge weight $w_{ij}$ and each arrival probability $p_t$. \cref{figure:noise_robustness} shows how the performance of $\MG$ and baselines degrades as a function of $\rho$--- we see that while robustness to noise is configuration-dependent, \MG is consistently the most robust of the models we consider. 

The extent to which performance degrades for a particular graph configuration is related to the variance of edge weights incident on each online node. Adding noise increases the frequency with which an algorithm makes sub-optimal decisions, and so if this variance is high, mistakes are especially costly. This helps to explain the sizeable decline in competitive ratio for ER and Rideshare graphs that we observe in this experiment: the variance of edge weights incident on an online node is an order-of-magnitude larger in Erd{\H o}s-R{\'e}nyi and Rideshare graphs than in b-RGG and gMission graphs.

\begin{figure}[ht]
\vskip -0.2in
\centerline{\includegraphics[width=0.95\columnwidth]{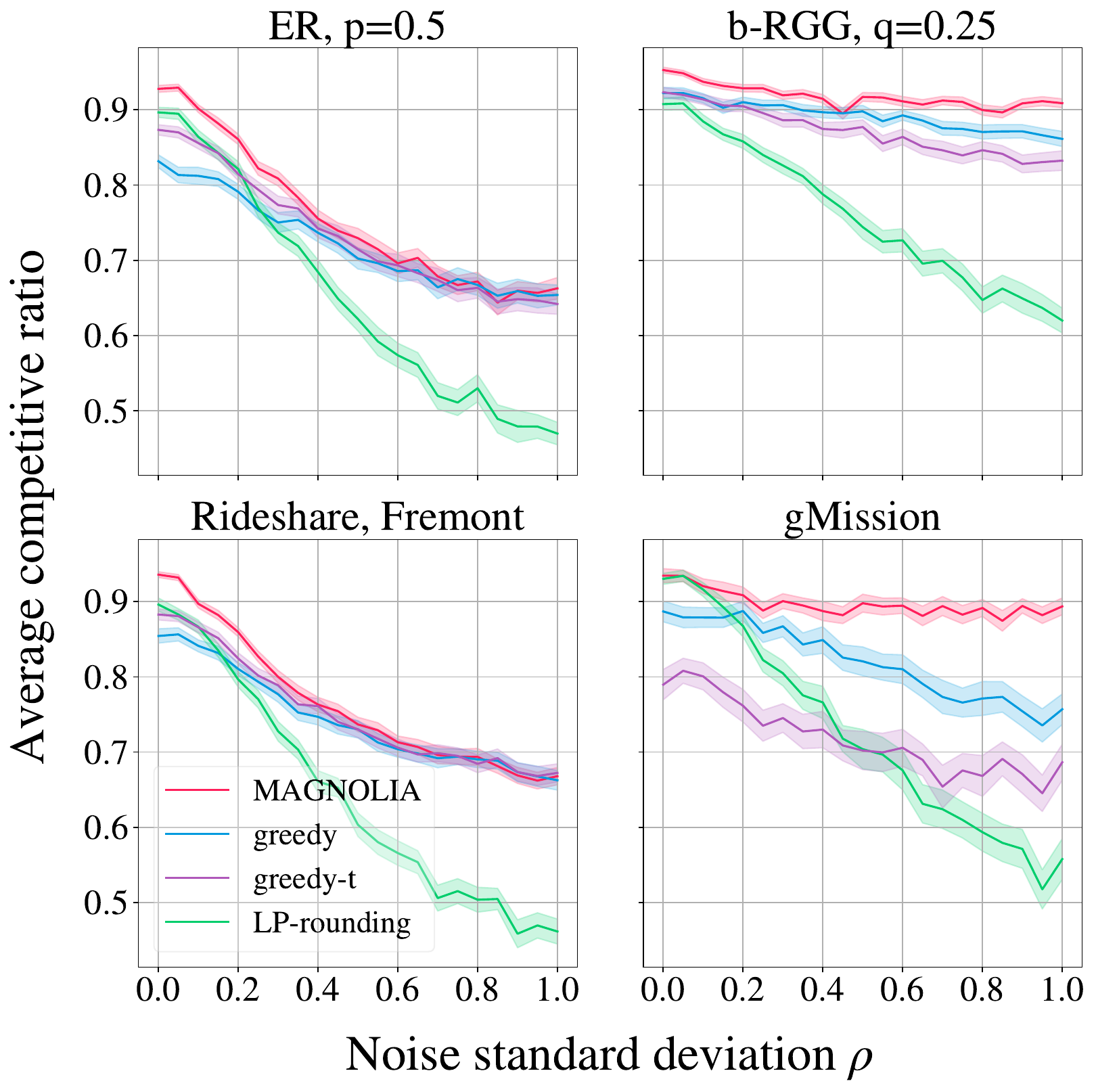}}
\caption{Evolution of competitive ratio as a function of noise level $\rho$ for graphs of size (10\x30). Results for additional graph configurations are available in \cref{appendix: noise-generalization}.}
\label{figure:noise_robustness}
\vskip -0.2in
\end{figure}

\section{Conclusions}

In this paper, we studied online matching in digital marketplaces, a problem of critical importance across various sectors such as advertising, crowdsourcing, ridesharing, and kidney exchange. We focused on the Online Bayesian Bipartite Matching problem, introducing a novel approach using GNNs to approximate the online optimal algorithm, which maximizes the expected weight of the final matching under uncertain future demand. We provided theoretical guarantees demonstrating that the value-to-go function can be efficiently approximated in bipartite random geometric graphs through local information aggregation---a process well-suited to GNNs. Empirically, our GNN-based algorithm, \MG, achieved strong performance against state-of-the-art baselines, showcasing strong generalization across different problem sizes and graph families.

This work exposes many directions for future research. For example, can we use the theoretical framework from \cref{sec:theory} to analyze the performance of GNNs on other problems beyond matching? In this vein, b-RGGs are one example of a graph family that exhibits locality and decomposibility. What other graph families exhibit these properties?

\section*{Impact statement}
This paper presents work whose goal is to advance the field of Machine Learning. There are many potential societal consequences of our work, none of which we feel must be specifically highlighted here.

\section*{Acknowledgements}
This work was supported in part by NSF grant CCF-2338226, AFSOR grant FA9550-23-1-0251, ONR grant N000142212771, and a National Defense Science \& Engineering Graduate (NDSEG) fellowship.

\bibliographystyle{plainnat}
\bibliography{references}

\newpage
\appendix
\onecolumn

\section{Proofs} \label{appendix: proofs}

\Decomposable*
\begin{proof}
    Consider $\pi \sim \Pi_k$, and let $\pi_\ell$ be the boundaries of $\pi$ along dimension $\ell$ for $\ell \in [d]$. For $i, j \in [N]$ such that $||\vec{x_i} - \vec{x_j}||_\infty \leq \Delta$, notice that $\vec{x_i}$ and $\vec{x_j}$ lie in different cells of $\pi$ precisely when in at least one dimension $\ell$, some point in $\pi_\ell$ falls in the interval $[(\vec{x_i})_\ell, (\vec{x_j})_\ell]$. By symmetry, the probability that this occurs is equal to the length of the interval $[(\vec{x_i})_\ell, (\vec{x_j})_\ell]$ over the measure of possible shifts $1/k$. Moreover, $\frac{|(\vec{x_i})_\ell - (\vec{x_j})_\ell|}{1/k} \leq k\Delta$, so
\begin{align*}
    \Pr_{\pi\sim\Pi_k}[\vec{x_i}\text{ and }\vec{x_j} \;\text{lie in different cells of $\pi$}] 
        &\leq 1 - \left (1 - k\Delta \right )^{d} \\
        &\leq 1 - \left (1 - \left(\frac{\epsilon}{2d\Delta} + 1\right)\Delta \right )^{d} \\
        &\leq 1 - \left (1 - \frac{\epsilon}{d} \right )^{d} \\
        &\leq 1 - (1 - \epsilon) \\
        &= \eps.
\end{align*}
\end{proof}

\ballsInBins*
\begin{proof}
    Let $c$ be a constant such that $K \geq cN$ for $N$ sufficiently large. Following Lemma 5.1 from \cite{Mitzenmacher05:Probability}, the probability that at least $M$ balls are dropped in bin 1 is at most
    
    \[\binom{N}{M} \left(\frac{\beta}{K}\right)^M\]
    
    by a union bound over the probability of each subset of $M$ balls being dropped in bin 1. In particular, there are $\binom{N}{M}$ possible subsets of balls, and the probability of selecting bin 1 for each of $M$ chosen balls is bounded above by $\left(\beta/
    K\right)^M$. By another union bound over the $K$ bins, the probability that any bin has a load of at least $M$ balls is at most
    \[K\binom{N}{M} \left(\frac{\beta}{K}\right)^M.\]
    
    We use the fact that $f(x) = x\binom{N}{M} \left(\frac{\beta}{x}\right)^M$ is decreasing for $x >0$ and the inequalities
    
    \[\binom{N}{M}\left(\frac{1}{N} \right)^M \leq \frac{1}{M!} \leq \left(\frac{e}{M}\right)^M\]

    to conclude that
    
    \[K\binom{N}{M}\left(\frac{\beta}{K} \right)^M \leq cN\binom{N}{M}\left(\frac{\beta}{cN} \right)^M = cN\left(\frac{\beta}{c}\right)^M \cdot \binom{N}{M}\left(\frac{1}{N} \right)^M \leq cN\left(\frac{\beta e}{cM}\right)^M.\]
    
    For $M \geq 3\beta \ln N / \ln\ln N$, the probability that any bin receives more than $M$ balls in bounded above by
    \begin{align*}
        cN\left(\frac{\beta e}{cM}\right)^M 
            &\leq cN\left(\frac{e\ln\ln N}{3c \ln N}\right)^{3\beta \ln N / \ln\ln N} \\
            &\leq cN\left(\frac{\ln\ln N}{c \ln N}\right)^{3\beta \ln N / \ln\ln N} \\
            &= e^{\ln c + \ln N}\left(e^{\ln\ln\ln N- \ln\ln N - \ln c}\right)^{3\beta \ln N / \ln\ln N} \\
            &= e^{\ln c + (1 - 3\beta) \ln N + 3\beta \ln N \left(\frac{\ln\ln\ln N - \ln c}{\ln\ln N}\right)}  \\
            &\leq ce^{-2 \ln N + 3\beta \ln N \left(\frac{\ln\ln\ln N - \ln c}{\ln\ln N}\right)} \\
            &\leq \frac{c}{N} \\
            &= O\left(1/N\right)
    \end{align*}
    for $N$ sufficiently large.
\end{proof}

\MaxLoad*
\begin{proof}
There are $k^d  = \Omega(N)$ cells of $\pi$, each of volume $\frac{1}{k^d}$. Since $\cD$ is $\beta$-smooth, the total density of $\cD$ in each cell is at most $\frac{\beta}{k^d}$. Treating the sampling of vectors from $\cD$ as a balls-into-bins process where balls are the $N$ vectors, and bins are the $\Omega(N)$ cells of $\pi$, the result follows from \cref{lemma: balls-into-bins}. We note that $\frac{\log N}{\log \log N} = O(\log N).$
\end{proof}

\LocalDecomposition*
\begin{proof}
(1) The definition of a b-RGG ensures that for $G \in \textup{support}(G(m, n, \cD, \Delta))$ an edge $(i,j)$ can only exist if $||\vec{x_i} - \vec{x_j}||_\infty \leq \Delta$. By \Cref{lemma: cut_prob}, $\vec{x_i}$ and $\vec{x_j}$ belong to the same cell of $\pi \sim \Pi_k$ with probability at least $1- \epsilon$. Equivalently, $i$ and $j$ belong to the same subgraph of $G(\pi)$ with probability at least $1-\epsilon$.

(2) \Cref{cor: max-load} implies that for $\pi \in \textup{support}(\Pi_k)$, the maximum number of latent embeddings of $G \sim G(m, n, \cD, \Delta)$ in any cell of $\pi$ is $O(\beta \log N)$ with probability at least $1 - O(1/N)$. The result follows from the observation that nodes in the same subgraph of $G(\pi)$ must have latent embeddings in the same cell of $\pi$. 
\end{proof}

\VtgUB*
\begin{proof}
 We will use an inductive argument on the number of online nodes in $G$. It is clear that $\cV(G(\pi)) = \cV(G) = 0$ for any partition $\pi$ when $G$ has no online nodes since there is nothing to match. Now, assume that $\cV(G(\pi)) \leq \cV(G)$ for all graphs $G \in \textup{support}(G(m, n, \cD, \Delta))$ with at most $t-1$ online nodes for some $t \geq 1$ and for all partitions $\pi$. Let $G \in \textup{support}(G(m, n, \cD, \Delta))$ be a graph on $t$ online nodes and let $\pi$ be any hypercube partition. Then,
\begin{align*}
    \cV(G) 
        &= \cV_{G}(L, 1)\\
        &= (1 - \vec{p}_1) \cdot \cV_G(L, 2) + \vec{p}_1 \cdot \max\left\{\cV_G(L, 2), \max_{u \in \mathcal{N}_G(1)} \{ w_{1u} + \cV_G(L \setminus \{u\}, 2)\}\right\} \\
       &\geq (1 - \vec{p}_1) \cdot \cV_{G(\pi)}(L, 2) + \vec{p}_1 \cdot \max\left\{\cV_{G(\pi)}(L, 2), \max_{u \in \mathcal{N}_G(1)} \{ w_{1u} + \cV_{G(\pi)}(L \setminus \{u\}, 2)\}\right\} \\
      &\geq (1 - \vec{p}_1) \cdot \cV_{G(\pi)}(L, 2) + \vec{p}_1 \cdot \max\left\{\cV_{G(\pi)}(L, 2), \max_{u \in \mathcal{N}_{G(\pi)}(1)} \{ w_{1u} + \cV_{G(\pi)}(L \setminus \{u\}, 2)\}\right\} \\
      &= \cV_{G(\pi)}(L, 1) \\
      &= \cV(G(\pi)).
\end{align*}

The first inequality is an application of the inductive hypothesis, since $\cV_{G}(L, 2)$ and $\cV_G(L \setminus \{u\}, 2)$ are both full value-to-go computations on a subgraph of $G$ with $t-1$ nodes. The second follows from the fact that $\mathcal{N}_{G(\pi)}(1) \subseteq \mathcal{N}_G(1)$, as $G(\pi)$ is formed from $G$ by removing edges.
\end{proof}

\VtgLB*
\begin{proof}
\begin{proofwrap}
     It is helpful to first decompose the value-to-go $\cV(G)$ into a contribution from each edge $e \in E(G)$. To do so, we make use of the fact that $\cV(G)$ is the expected value of the matching returned by $\onOPT$. In greater detail, let $\vec{a}  \in \{0,1\}^m$ represent an arrival sequence of online nodes where node $t$ arrives if $\vec{a} _t = 1$ and does not arrive if $\vec{a} _t = 0$. The likelihood of observing different arrival sequences is governed by the arrival probability vector $\vec{p}$. Namely, for $\vec{a} \in \{0,1\}^m$,
\[\Pr[\vec{a}] = \prod_{t=1}^m \left(\vec{p}_t \cdot \vec{a}_t + (1-\vec{p}_t) \cdot (1-\vec{a}_t)\right).\]
Notice that all randomness in the output of \onOPT comes from the random arrivals, so given a fixed arrival sequence $\vec{a}$, \onOPT returns a deterministic matching $M(\vec{a})$. Then, we can write

\[\cV(G) = \sum_{\vec{a} \in \{0,1\}^m} \left(\Pr[\vec{a} ] \cdot \sum_{e \in M(\vec{a} )} w_e\right) = \sum_{e \in E(G)} w_e \cdot \left( \sum_{\vec{a}  \in \{0,1\}^m \; : \; e \in M(\vec{a} )} \Pr[\vec{a} ]\right) = \sum_{e \in E(G)} \alpha_e w_e,\]

where

\[\alpha_e =\sum_{\vec{a}  \in \{0,1\}^m \; : \; e \in M(\vec{a} )} \Pr[\vec{a} ].\]

Crucially, notice that for any partition $\pi$,
\[\cV(G(\pi)) \geq \sum_{e \in E(G)} \alpha_e w_e \cdot 1\{e \in E(G(\pi))\}.\]
The right-hand side is the expected value of the matching returned by an online algorithm on $G(\pi)$ which, for any arrival sequence $\vec{a}$, outputs $M(\vec{a}) \cap E(G(\pi))$. The left-hand side is the expected value of the matching returned by \onOPT on $G(\pi)$. It follows immediately from these facts and \cref{lemma: cut_prob} that
\[\underset{\pi \sim \Pi_k}{\E} \left[\cV(G(\pi))\right] \geq \underset{\pi \sim \Pi_k}{\E} \left[\sum_{e \in E(G)} \alpha_e w_e 
\cdot 1\{e \in E(G(\pi))\} \right] \geq (1 - \epsilon) \sum_{e \in E(G)} \alpha_e w_e  = (1-\epsilon) \cdot \cV(G).\]
\end{proofwrap}
\end{proof}

\localApprox*
\begin{proof}
    To simplify notation, we refer to the sample mean $\frac{1}{\ell}\sum_{i=1}^\ell \cV(G(\pi_i))$ and true mean $\underset{\pi \sim \Pi_k}{\E}[\cV(G(\pi))]$ as $S_\ell$ and $\E[S_\ell]$, respectively. Also let $\vec{\pi} = \{\pi_1, \dots, \pi_\ell\}$ be shorthand for an i.i.d. sample of $\ell$ partitions from $\Pi_k$, and let $\mathcal{G}$ be shorthand for $G(m,n,\cD, \Delta)$.

    Consider the following events. For $G \in \textup{support}(\cG)$ and $\vec{\pi} \in \textup{support}(\Pi_k)$,
    \begin{itemize}
        \item $A(G, \vec{\pi})$ is the event that the approximation $(1 - \epsilon) \cdot \E[S_\ell] \leq S_\ell \leq (1 + \epsilon) \cdot \E[S_\ell]$ holds on $G$ for partitions $\vec{\pi}$.
        \item $B(G, \vec{\pi})$ is the event the connected components of $G(\pi_i)$ are of size $O(\beta \log N)$ for each partition $\pi_i \in \vec{\pi}$. When this is the case, $\cV(G(\pi_i))$ can be computed exactly by a $O(\beta \log N)$-local function that simply computes VTG over a $O(\beta \log N)$-hop neighborhood.
    \end{itemize}

     We need to show that for $N$ sufficiently large, the event $A(G, \vec{\pi}) \land B(G, \vec{\pi})$ occurs with probability at least $1-\delta$ over the random draws of $G \sim \cG$ and $\vec{\pi} \sim \Pi_k$. Toward that end, notice that

    \begin{align*}
        \underset{G \sim \cG, \vec{\pi} \sim \Pi_k}{\Pr}\left[A(G, \vec{\pi}) \land B(G, \vec{\pi})\right]
            &= 1 - \underset{G \sim \cG, \vec{\pi} \sim \Pi_k}{\Pr}\left[A(G, \vec{\pi})^\complement \lor B(G, \vec{\pi})^\complement\right] \\
            &\geq 1 - \underset{G \sim \cG, \vec{\pi} \sim \Pi_k}{\Pr}\left[A(G, \vec{\pi})^\complement\right] -  \underset{G \sim \cG, \vec{\pi} \sim \Pi_k}{\Pr}\left[B(G, \vec{\pi})^\complement\right]. 
    \end{align*}
    We have from the tower property of conditional expectation that
    \begin{align*}
       \underset{G \sim \cG, \vec{\pi} \sim \Pi_k}{\Pr}\left[A(G, \vec{\pi})^\complement\right] = \underset{G \sim \cG}{\E}\left[ \Pr_{\vec{\pi} \sim \Pi_k}[A(G, \vec{\pi})^\complement \; | \; G]\right]
    \end{align*}
    and
    \begin{align*}
       \underset{G \sim \cG, \vec{\pi} \sim \Pi_k}{\Pr}\left[B(G, \vec{\pi})^\complement\right] = \underset{\vec{\pi} \sim \Pi_k}{\E}\left[ \Pr_{G \sim \cG}[B(G, \vec{\pi})^\complement \; | \; \vec{\pi}]\right].
    \end{align*}

 By \cref{theorem: local-decomp} and a union bound over the $\ell$ drawn partitions, we have that
    \[\underset{G \sim \cG}{\Pr}\left[B(G, \vec{\pi})^\complement \; | \; \vec{\pi}\right] \leq O(\ell/N) \leq \delta/2\]
    for $N$ sufficiently large.

    To bound $\underset{\vec{\pi}\sim \Pi_k}{\Pr}[A(G, \vec{\pi})^\complement \; | \; G]$, first notice that
    for fixed $G \in \textup{support}(\cG)$ and $\vec{\pi} \sim \Pi_k$, the $\cV(G(\pi_i))$'s are i.i.d. random variables which take values in the interval $[0, \cV(G)]$ by \cref{lemma: vtg-upper-bound}. Applying a standard Hoeffding bound, for $\ell \geq \frac{2}{\epsilon^2}\log(\frac{4}{\delta})$ sampled partitions the probability of a bad approximation is
    \begin{align*}
        \Pr\left[\left|S_\ell - \E[S_\ell]\right| \geq \epsilon \E[S_\ell]\right] 
            &\leq \Pr\left[\left|S_\ell - \E[S_\ell]\right| \geq \epsilon (1-\epsilon)\cV(G)]\right] &&\text{\cref{lemma: vtg-lower-bound}} \\
            &\leq 2\exp\left(-\frac{2\epsilon^2(1-\epsilon)^2 \cV(G)^2}{\ell \cdot \cV(G)^2/\ell^2}\right) \\
            &\leq 2\exp\left(-\ell\epsilon^2/2\right) \\
            &\leq \delta/2.
    \end{align*}

    Thus for sufficiently large $N$, we've shown that
    \begin{align*}
        \underset{G \sim \cG, \vec{\pi} \sim \Pi_k}{\Pr}\left[A(G, \vec{\pi}) \land B(G, \vec{\pi})\right] &\geq 1-\underset{G \sim \cG}{\E}\left[ \Pr_{\vec{\pi} \sim \Pi_k}[A(G, \vec{\pi})^\complement \; | \; G]\right]-\underset{\vec{\pi} \sim \Pi_k}{\E}\left[ \Pr_{G \sim \cG}[B(G, \vec{\pi})^\complement \; | \; \vec{\pi}]\right] \\
        &\geq 1-\underset{G \sim \cG}{\E}\left[ \delta/2\right]-\underset{\vec{\pi} \sim \Pi_k}{\E}\left[ \delta/2]\right] \\
        &= 1 - \delta.
    \end{align*}
\end{proof}

\mainTheorem*
\begin{proof}
Let $k = \lceil\frac{\epsilon}{2d\Delta}\rceil$ and let $\epsilon' = 1 - \sqrt{1 - \epsilon}$ so that $(1 - \epsilon')^2 = 1- \epsilon$. By \cref{lemma: vtg-lower-bound} with $\epsilon = \epsilon'$, we have that
$\E_{\pi \sim \Pi_k}[\cV(G(\pi))] \geq (1 -\epsilon') \cdot \cV(G)$
 for all $G \in \textup{supp}(G(m,n,\cD, \Delta)$. Now, consider the random function $h(G)$ which samples $\ell = \frac{2}{\epsilon'^2}\log(\frac{4}{\delta})$ partitions $\pi_1, \dots, \pi_\ell$ from $\Pi_k$ then outputs $\frac{1}{|I|} \sum_{i \in I} \cV(G(\pi_i))$, where $I \subseteq [\ell]$ is the set of indices for which $\cV(G(\pi_i))$ is $O(\beta \log N)$-local. For sufficiently large $N$, it follows from \cref{lemma: local-approx} for $\epsilon = \epsilon'$ that with probability $1 - \delta$ over the draw of $G$ from $G(m, n, \cD, \Delta)$ and the randomness of $h$, both
     \[\frac{1}{\ell} \sum_{i=1}^\ell \cV(G(\pi_i)) \geq (1 - \epsilon')  \underset{\pi \sim \Pi_k}{\E}[\cV(G(\pi))] \geq  (1 - \epsilon) \cdot \cV(G)\]
and 
$h(G) = \frac{1}{\ell} \sum_{i=1}^\ell \cV(G(\pi_i)).$
\end{proof}

\section{Experimental Details}
\subsection{Value-to-go computation}
We provide pseudo-code for computing value-to-go:
\begin{algorithm}[ht] 
\label{alg: vtg-dp}
\caption{$\cV(S, t$)}\label{alg: VTG}\label{alg: vtg-dp}
\begin{algorithmic}
    
\STATE \textbf{Input:} Unmatched offline node set $S$, timestep $t$, map $M$ for memoizing intermediate computation, probability vector $\vec{p}$

\IF{$|S| = 0$ or $t = m + 1$}
    \STATE \textbf{return} 0
\ENDIF
\bigbreak
\IF{$(S, \; t+1) \notin M$}
    \STATE $M[(S,\; t+1)] = \cV(S, t+1)$
     
    \FOR{$u \in \mcN(t) \cap S$}
        \IF{$(S \setminus \{u\},\; t+1) \notin M$}
            \STATE $M[(S \setminus \{u\},\; t+1)] = \cV(S \setminus \{u\}, t+1)$
        \ENDIF
    \ENDFOR

\ENDIF
\bigbreak
\STATE $v_{max} = \max_{u \in N(t) \cap S} M[(S \setminus \{u\},\; t+1)]$ \\
\STATE \textbf{return} $(1 - \vec{p}_t) \cdot M[(S,\; t+1)] + \vec{p}_t \cdot \max\{M[(S,\; t+1)], v_{max}\}$
\end{algorithmic}
\end{algorithm}
\label{appendix:experimental_details}
\subsection{Graph generation} \label{appendix: graph-generation}
\subsubsection{Random graph families}

\paragraph{Erd{\H o}s-R{\'e}nyi (ER) \cite{Erdos60:Evolution}.} Given parameters ($m$, $n$, $p$), we generate a bipartite graph $G$ on $m$ online and $n$ offline nodes where the edge between each (online, offline) node pair appears independently with probability $p$. Edge weights are sampled from the uniform distribution $U(0,1)$.

\paragraph{Barab{\'a}si-Albert (BA) \cite{Albert02:Statistical}.} We use a process similar to the one described in \cite{Borodin20:Experimental} to generate scale-free bipartite graphs. Given parameters ($m$, $n$, $b$), we generate a bipartite graph $G$ on $m$ online and $n$ offline nodes via a preferential attachment scheme:
\begin{enumerate}
    \item Start with all $n$ offline nodes.
    \item For each online node, attach it to $b$ offline nodes sampled without replacement, where the probability of selecting offline node $u$ is proportional to
    \[\Pr[u] = \frac{\textup{degree}(u)}{\sum_{u'} \textup{degree}(u')}.\]
\end{enumerate}
Similarly, to ER, edge weights are sampled from the uniform distribution $U(0,1)$.
\paragraph{Geometric (b-RGG).} Given parameters ($m$, $n$, $q$) with $q \in [0,1]$, we generate a bipartite graph $G$ on $m$ online and $n$ by doing the following:
\begin{enumerate}
    \item Assign each online and offline node $u$ to a uniform random position $p_u$ in $[0,1]^2$.
    \item Connect online node $v$ to offline node $w$ such that
    \[w_{vw} \propto -\|p_v - p_w\|_2.\]
    \item Only keep the $q$ fraction of edges with the largest weight.
\end{enumerate}

\subsubsection{Semi-synthetic and real-word graphs}

\paragraph{OSMnx rideshare (Rideshare).} We generate a semi-synthetic ridesharing dataset using the OSMnx library \cite{Boeing17:OSMnx}. This dataset generation process is very similar to the one for b-RGG. To make it closer to a real-world application, we replace distances between random points with the time to drive between intersections in a city. 

For a given city and parameters ($m$, $n$, $t$), we uniformly sample intersections from a street map layout to generate locations for $n$ drivers and $m$ riders. There is an edge between driver $i$ and rider $j$ if the drive time from $i$ to $j$ is below some threshold $t$ (in practice, $t$ is set to 15 minutes). Approximate drive times are computed using the OSMnx library. Finally, edge weights $w_{ij}$ are generated such that
\[w_{ij} \propto -(\text{drive time from $i$ to $j$}).\]

This dataset can be thought of as a simple ridesharing application in a city. Drivers are idling, waiting to be matched to riders who arrive online at known locations. The application's goal is to minimize the sum travel time between all driver-rider pairs or, equivalently, to maximize $\sum_{e \in M} w_e$ where $M$ is the online matching created by the algorithm. The threshold $t$ is set to avoid riders having to wait too long for a car.

In practice, we use cities of varying sizes, from several thousands of inhabitants (e.g. Piedmont, California) to several hundreds of thousands of inhabitants (e.g. Fremont, California).

\paragraph{gMission.} gMission is a spatial crowdsourcing dataset where offline workers are matched to tasks that arrive online. There is an edge between a worker $u$ and a task $v$ if the worker can perform that task. The associated weight $w_{uv}$ is the expected payoff the worker will get from that task, computed based on some distance metric between the task's and the worker's feature vectors. We note that this setting is very similar to the Random Geometric Graphs we prove results for. Inputs are random node-induced subgraphs of the gMission base graph, which is made available by \citet{Alomrani22:Deep}.

\subsection{Node, edge, and graph features}
We augment our graphs with several node-level and graph-level features that the GNN can leverage to improve its predictions.

\paragraph{Node features.} On a particular instance, the GNN underlying \MG makes a decision for each arrival of a new online node. As the current ``matching state" evolves over time, some node features remain unchanged while others are dynamic. Static node features include a positional encoder for the nodes, a one-hot encoding for the skip node, and a binary mask for the offline nodes. In this way, the GNN can (1) differentiate each node from all others, (2) recognize the skip node as being different from other offline nodes, and (3) discriminate online from offline nodes. Dynamic node features include a one-hot encoding for the node the GNN is currently matching, and an arrival probability vector that is updated to $1$ (respectively, $0$) for nodes that have already arrived (respectively, not arrived) in the run of the algorithm. 

\paragraph{Edge features.} The weight $w_{ij}$ of each edge $(i,j)$ is encoded as a 1-dimensional edge feature.

\paragraph{Graph features.} We use a single graph-level feature: the ratio of remaining unmatched online nodes to offline nodes. Intuitively, an algorithm for online bipartite matching should get more greedy as this ratio goes down since greedy decisions are unlikely to lead to later conflicts.

\subsection{Architecture}
The convolutional layers of our GNN follow a GENConv architecture~\cite{Li20:DeeperGCN} and its implementation in PyTorch Geometric~\cite{Fey19:Fast}. The embedding update rule for this architecture mirrors the functional form of the dynamic program representation of value-to-go:
$$h_v^{(k)} = \text{MLP}\left (h_v^{(k-1)} + \max_{u \in \mcN(v)} \left\{\text{ReLU}(h_u^{(k-1)} + w_{vu})\right\} \right).$$
We compare different GNN architectures for VTG approximation in \cref{appendix:architecture-comparison}.

\subsection{Training error and model accuracy}
We train our model using mean squared error. On each training sample, the model is given a graph instance and the current online node $t$. It then tries to predict the value-to-go of all nodes in the graph. The only valid actions on step $t$ are to either match $t$ to one of its neighbors or not to match $t$ which is represented by matching $t$ to the skip node. Hence, the model's prediction is masked to only consider the neighbors of $t$ (which include the skip node) and we compute the mean squared error between those predictions and the actual value-to-go values given by the online optimal algorithm.

Model accuracy is used for hyperparameter tuning and is a good metric for the empirical performance of the GNN when used as an online matching algorithm. It is simply computed as the percentage of times the GNN chooses the same action as the online optimal algorithm. Here, choosing the same action could either mean matching to the same offline node or skipping the online node.

\subsection{Hyperparameter tuning} \label{appendix: hyperparameters}
We perform hyperparameter tuning using a validation set of size 300. We perform around $1000$ trials, tuning the parameters as described in \cref{table:hyperparams}. Each trial is evaluated by its validation set accuracy. The hyperparameters are tuned with Bayesian search \cite{Snoek12:Practical} and pruning from the Optuna library \cite{Akiba19:Optuna} to stop unpromising runs early. Similarly to the training setup, the hyperparameter tuning is done on small graphs (10\x6) and (6\x10) even though the eventual testing may be on larger graphs. 
All the training was done on an NVIDIA GeForce GTX Titan X. 

\bgroup
\def\arraystretch{1.2}
\begin{table}[tb]
    \centering
     \caption{Hyperparameter ranges}
     \begin{tabular}{ |c|c|c|}
     \hline
     &Hyperparameter& Values\\
     \hline 
       \multirow{4}{*}{\rotatebox[origin=c]{90}{GNN}} &$\#$ of message passing layers   & $\{1,\dots,6\}$ \\
     &\# of MLP layers &   $\{1,\dots,5\}$ \\
     &Hidden dimension size & $\{2^i \,| \,i \in \{1, \dots, 6\}\}$ \\
     &Dropout   & [0, 0.5] \\
     \hline
    \multirow{3}{*}{\rotatebox[origin=c]{90}{Training}} &Batch size   & $\{2^i \,| \, i \in \{1, \dots, 6\}\}$ \\
    &Epochs &   $\{2^i \,| \, i \in \{1, \dots, 8\}\}$ \\
    & Learning rate & $[1e-5, 1e-10]$ \\
    \hline
    \end{tabular}
    \label{table:hyperparams}

\end{table}
\egroup

\subsection{\MG using different architectures}
\label{appendix:architecture-comparison}
One of \MG's strengths is that it is a modular pipeline that can accept any GNN architecture as a VTG approximator. In \Cref{figure:other_gnns}, we validate the choice of the GENConv architecture by including a comparison with various state-of-the-art GNN models \cite{Li20:DeeperGCN,Morris18:Weisfeiler,Brody21:Attentive}. Note that GENConv and DeeperGCN have the same underlying GNN but use different layers and aggregation functions. We observe that all models achieve similar competitive ratios, with GENConv and DeeperGCN performing slightly better.

\begin{figure*}[ht]

\centering
    \includegraphics[width=0.95\textwidth]{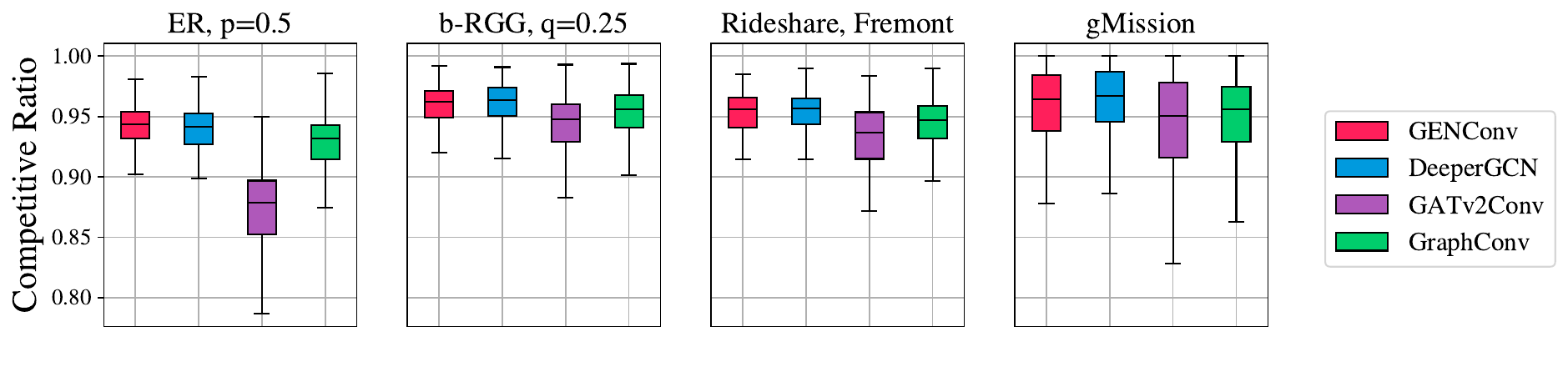}
    \caption{Boxplot showing the distribution of competitive ratios for \MG with different underlying GNN architectures across graph configurations. All graphs are of size (10\x20).}
    \label{figure:other_gnns}
\end{figure*}

\subsection{Complete results for \cref{section:results}}
For the results in \cref{appendix: base_more_params,appendix: size_more_params,appendix:reqs-for-size-gen}, the GNN underlying $\MG$ is trained on a set of 2000 instances of graphs from ER with $p=0.75$, BA with $b=4$, and GEOM with $q=0.25$ of size (6\x 10). Reported results are distributions and averages of competitive ratios from 6000 \RH unseen instances of size (10\x30) across 12 graph configurations. In particular, we elected to train on a subset of graph configurations since this considerably improves training time, and in our experience, leads to similar results.

The results in \cref{appendix: meta-models} come from a GNN-based meta model which is given as input two GNNs trained on graphs of size (6\x10) and (10\x6), respectively. The meta-GNN is trained on the competitive ratios achieved by each GNN on 2000 instances of graphs from ER with $p=0.75$, BA with $b=4$, and GEOM with $q=0.25$, each across graph sizes (10\x6), (8\x8), and (6\x10). Whereas the GNN-based model selects a GNN to run each instance on using predicted competitive ratios, the threshold-based meta algorithm simply runs on one GNN if the ratio of online nodes to offline nodes exceeds a fixed threshold $t$. Empirically, we found that $t=1.5$ performs well. Evaluation for both models once again happens over 6000 instances from the 12 graph configurations.

Finally, in \cref{appendix: noise-generalization}, we train \MG with a different GNN on each possible noise level $\rho$. The training and evaluation specifications for each of these noise-dependent GNNs are the same as those from \cref{appendix: base_more_params}.

\newpage
\subsubsection{\MG makes good decisions}\label{appendix: base_more_params}
\begin{table}[h]
    \centering
     \caption{Average competitive ratio by graph configuration with node ratio (10\x20).}
     \begin{tabular}{ |c|c|c|c|c|c|}
     \hline
     &Parameter & GNN& Greedy& Threshold Greedy & LP\\
     \hline 
       \multirow{3}{*}{ER} & $p=0.25$ & \textbf{0.945} & 0.881 & 0.887 & 0.929 \\
     & $p=0.5$ & \textbf{0.943} & 0.883 & 0.897 & 0.917 \\
     & $p=0.75$ & \textbf{0.949} & 0.905 & 0.914 & 0.915 \\
     \hline 
       \multirow{3}{*}{BA} & $b=4$ & \textbf{0.937} & 0.857 & 0.875 & 0.921 \\
     & $b=6$ & \textbf{0.944} & 0.885 & 0.896 & 0.916 \\
     & $b=8$ & \textbf{0.955} & 0.911 & 0.922 & 0.921 \\
     \hline 
       \multirow{3}{*}{GEOM} & $q=0.15$ & \textbf{0.978} & 0.938 & 0.938 & 0.958 \\

     & $q=0.25$ & \textbf{0.961} & 0.922 & 0.922 & 0.939 \\

     & $q=0.5$ & \textbf{0.950} & 0.924 & 0.924 & 0.921 \\

     \hline 
       \multirow{2}{*}{RIDESHARE} & $\text{city}=\text{Piedmont}$ & \textbf{0.957} & 0.935 & 0.939 & 0.936 \\

     & $\text{city}=\text{Fremont}$ & \textbf{0.957} & 0.929 & 0.933 & 0.930 \\

     \hline 
       \multirow{1}{*}{GMISSION} & - & \textbf{0.951} & 0.929 & 0.802 & \textbf{0.951} \\
     \hline 
    \end{tabular}
    \label{table:base_more_params}
\end{table}

\newpage

\subsubsection{\MG shows size generalization}\label{appendix: size_more_params}

\begin{figure*}[ht]
\centering
    \includegraphics[width=0.95\textwidth]{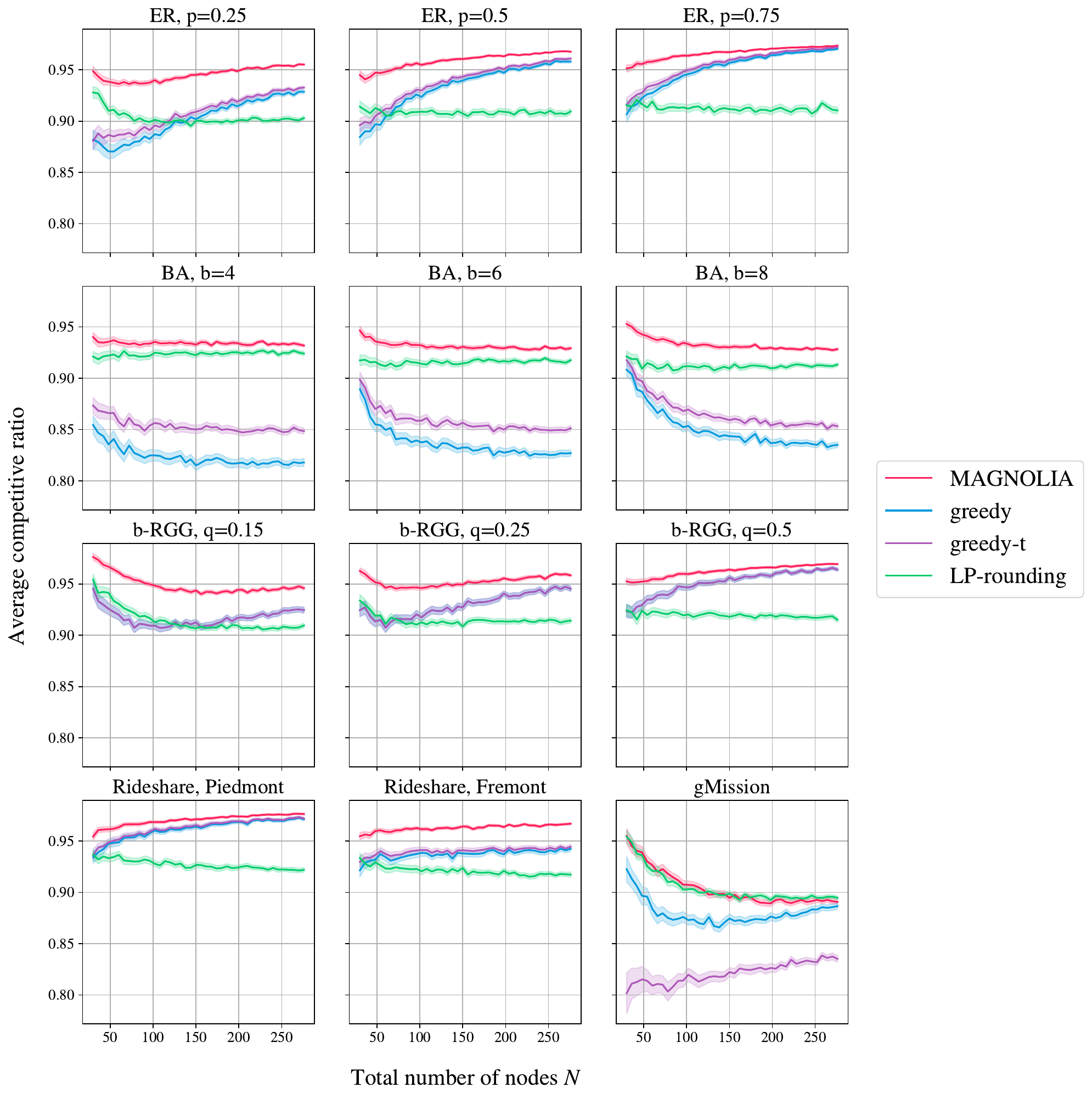}
    \caption{Evolution of competitive ratio over graphs of increasing size for a GNN trained on graphs of size (6\x10). All test graphs have a 2:1 ratio of online to offline nodes.}
    \label{figure:full_size_plots}
\end{figure*}

\newpage
\subsubsection{Requirements for size generalization}
\label{appendix:reqs-for-size-gen}
One of the advantages of our approach is that it performs well when trained on small graphs. Indeed, \cref{appendix: size_more_params} shows that \MG exhibits size generalization. Two questions remain:
\begin{enumerate}
    \item Would we observe better performance if \MG was trained on larger graphs?
    \item How small can the training graphs be while still exhibiting size generalization?
\end{enumerate}
To address these questions, we compare the size generalization of \MG when trained on graphs of varying size. We see in \Cref{figure:diff_gnn} that \MG shows strong generalization to graph size, even when trained on very small graphs. We observe that, surprisingly, the GNN trained on the smallest graphs (5\x3) performs the best on gMission. This can be explained by the fact that (5\x3) graphs are more likely to be sparse, making them similar to the very sparse gMission inputs.

\begin{figure*}[ht]

\centering
    \includegraphics[width=0.95\textwidth]{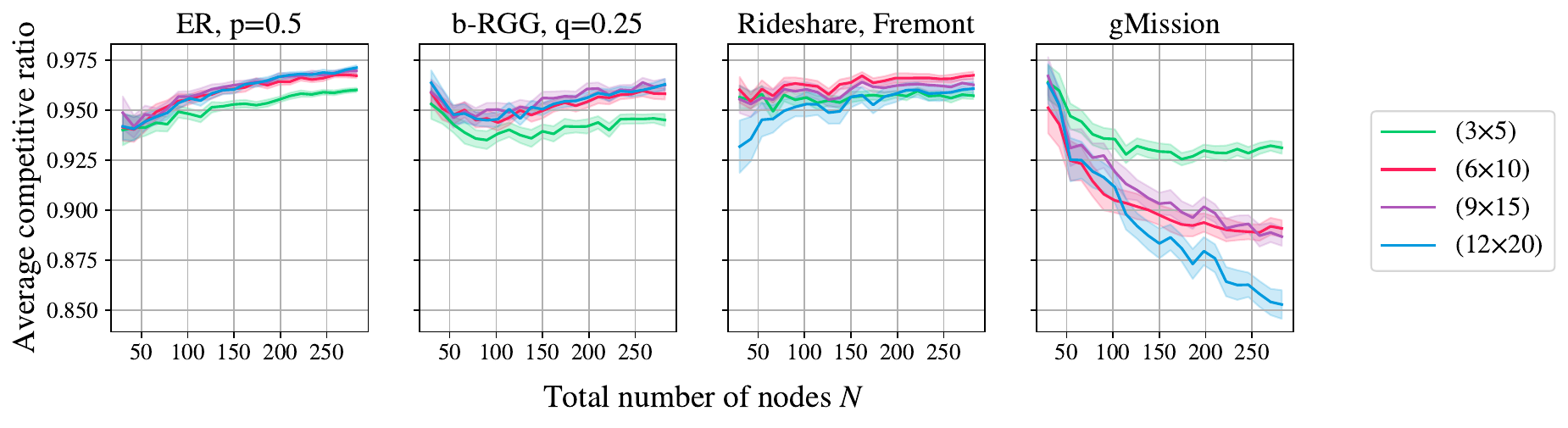}
    \caption{Evolution of competitive ratio over graphs of increasing size for GNNs trained on graphs of different sizes with the same (6:10) ratio. All test graphs have a 1:2 ratio of offline to online nodes.}
    \label{figure:diff_gnn}
\end{figure*}

\newpage
\subsubsection{Meta-model improves regime generalization}\label{appendix: meta-models}
\begin{figure*}[ht]
\centering
    \includegraphics[width=0.95\textwidth]{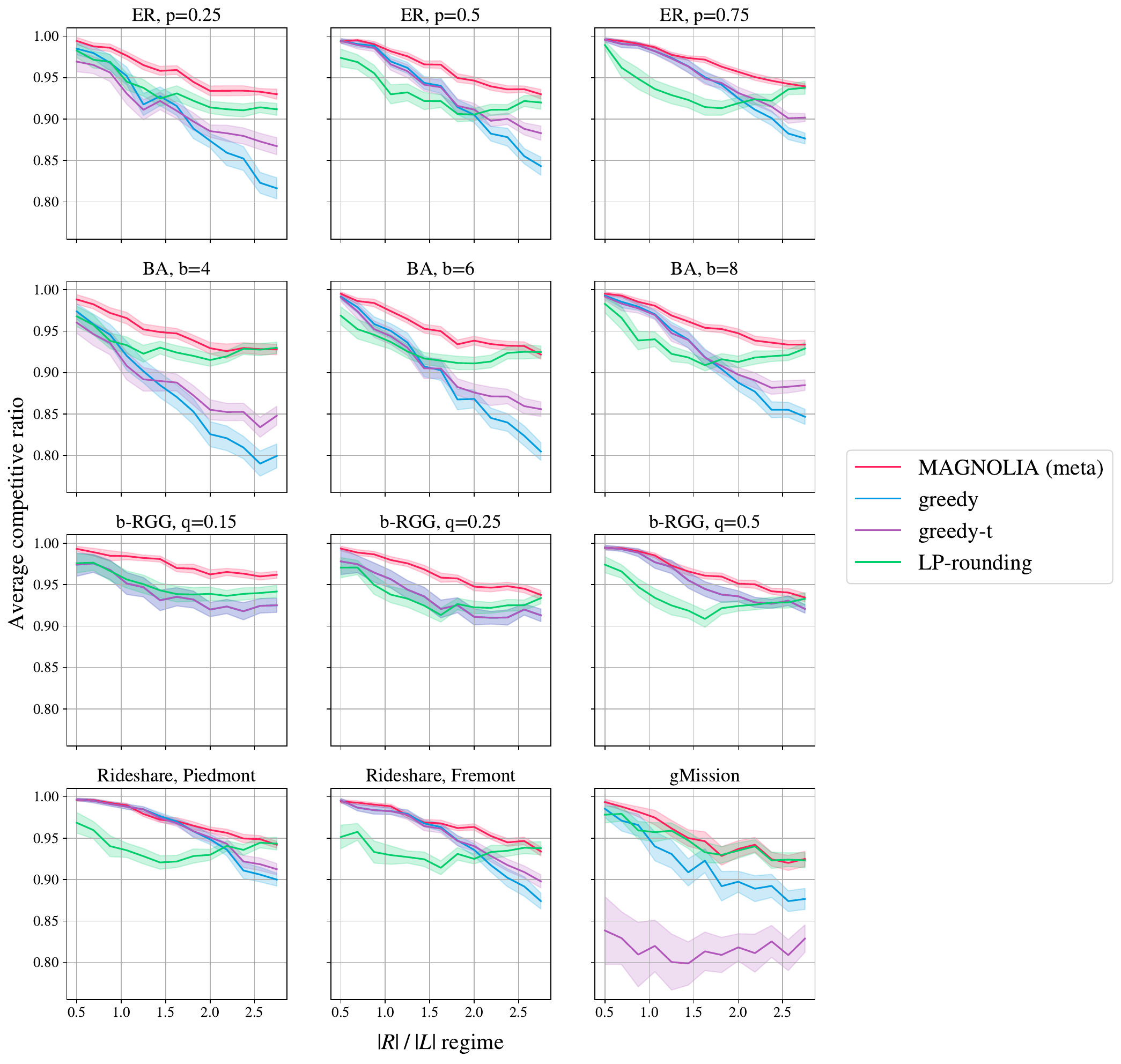}
    \caption{Evolution of competitive ratio over regimes for \MG enabled with a meta-GNN. For evaluation, $|L|$ is kept fixed at 16 offline nodes, and $|R|$ varies from 8 to 64 online nodes}
    \label{figure:full_meta_plots}
\end{figure*}
\newpage
\begin{figure*}[ht]
\centering
    \includegraphics[width=0.95\textwidth]{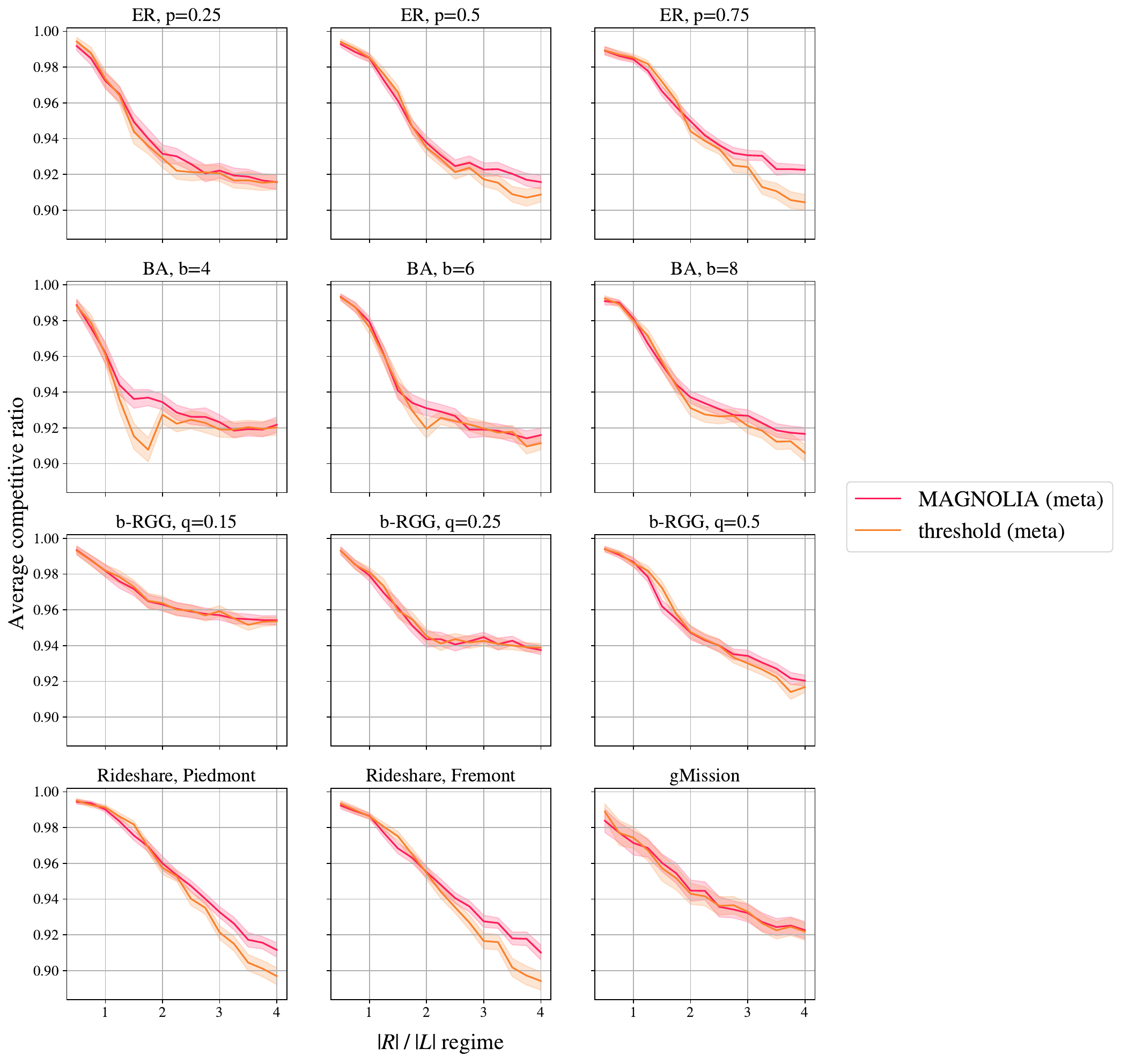}
    \caption{Evolution of competitive ratio over regimes for \MG enabled with a meta-GNN against simple threshold model. For evaluation, $|L|$ is kept fixed at 16 offline nodes, and $|R|$ varies from 8 to 64 online nodes}
    \label{figure:full_meta_plots}
\end{figure*}

\newpage
\subsubsection{\MG is robust to noisy inputs}
\label{appendix: noise-generalization}

\begin{figure*}[ht]
\centering
    \includegraphics[width=0.95\textwidth]{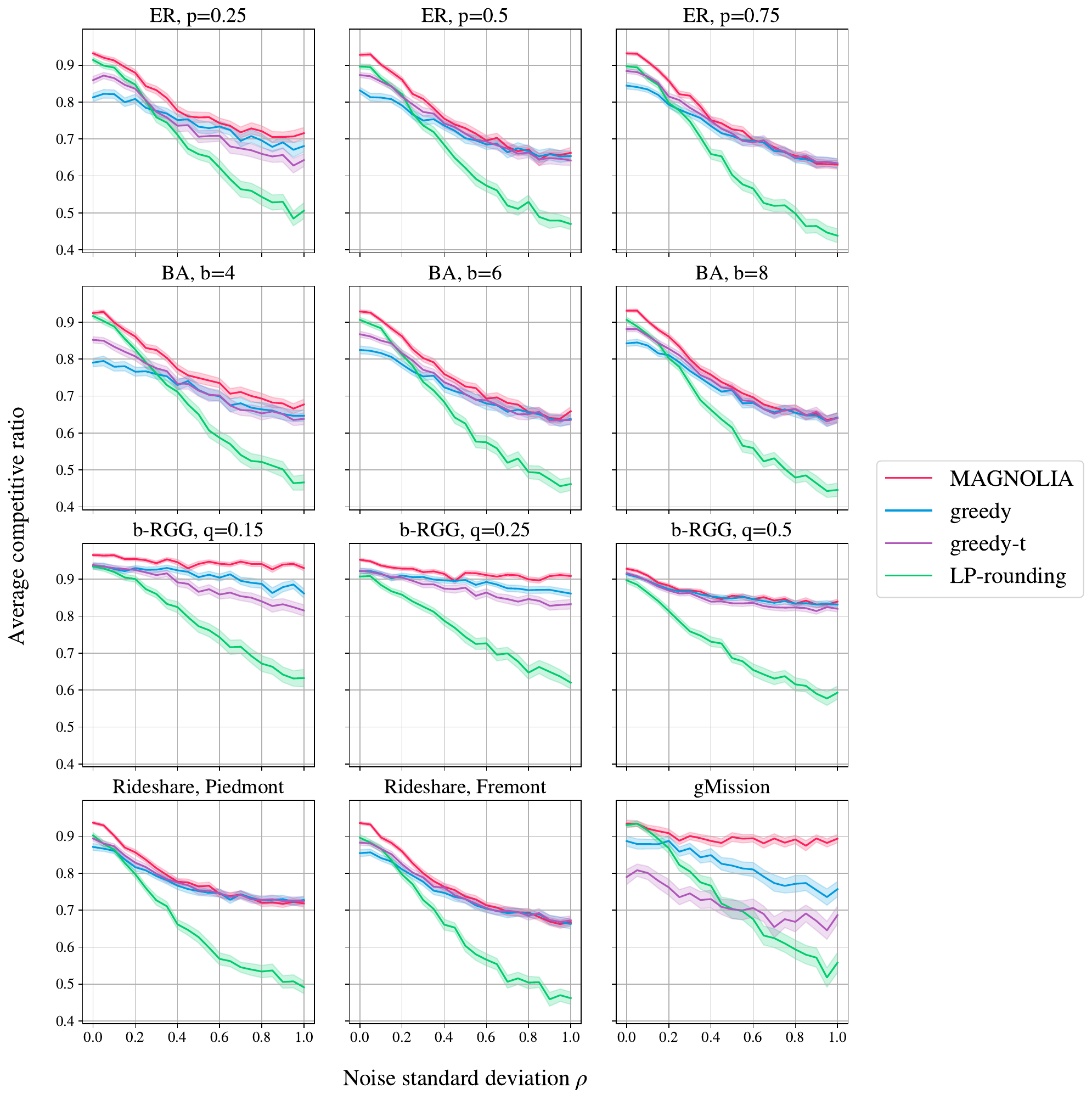}
    \caption{Evolution of competitive ratio as a function of noise level $\rho$ for graphs of size (10\x30). A $\mcN(0, \rho^2)$ noise is added independently to each edge weight and arrival probability.}
    \label{figure:noise_robustness_plots_all}
\end{figure*}

\end{document}